\RequirePackage{fix-cm}
\documentclass[smallextended]{svjour3}       % onecolumn (second format)
\smartqed  % flush right qed marks, e.g. at end of proof

\usepackage{amssymb}

\usepackage[unicode=true,
linktocpage,
linkbordercolor={0.5 0.5 1},
citebordercolor={1 1 1},
linkcolor=blue]{hyperref}

\usepackage{url}
\usepackage{graphicx}
\usepackage{xcolor}

\usepackage[round]{natbib}

\usepackage{enumerate}
\usepackage[tight,footnotesize]{subfigure} % obsolete

\usepackage{booktabs}
\usepackage{epstopdf}
\usepackage{multirow}

\usepackage{amsmath,amssymb}
\allowdisplaybreaks
\usepackage{algorithmic,algorithm}

\usepackage{alphalph}

\newcommand{\xopt}{x^{*}}
\newcommand{\fbest}{y^{*}}

\newcommand{\xt}{x_{t}}
\newcommand{\fu}{f^{u}_{t-1}}
\newcommand{\fl}{f^{l}_{t-1}}
\newcommand{\sigmat}{\sigma_{t-1}}
\newcommand{\mut}{\mu_{t-1}}
\newcommand{\bt}{\beta_{t}^{1/2}}

\definecolor{blu}{rgb}{0,0,1}
\definecolor{gre}{rgb}{0,.5,0}

\newcommand{\reb}[1]{\textcolor{black}{#1}}

\newcommand{\mikecommenta}[1]{\textcolor{black}{#1}}
\def\blu#1{{\color{black}#1}}
\def\ylow#1{{\color{black}#1}}
\def\gre#1{{\color{black}#1}}

% !TEX root = Paper.tex
\makeatletter
\def\thm@space@setup{%
  \thm@preskip=\parskip \thm@postskip=0pt
}
\makeatother

\renewcommand{\d}[1]{\ensuremath{\operatorname{d}\!{#1}}}

%\DeclarePairedDelimiter\ceil{\lceil}{\rceil}
%\DeclarePairedDelimiter\floor{\lfloor}{\rfloor}

%\newcommand{\ceil}[1]{\lceil#1\rceil}

%arrows 

%distributions 

%\newtheorem{lemma}{Lemma}
%\newtheorem{theorem}{Theorem}
%\newtheorem{definition}{Definition}
%\newtheorem{corollary}{Corollary}
%\newtheorem{remark}{Remark}

% \newcommand{\R}{I\!\! R}
\newcommand{\R}{\mathbb{R}}

\newcommand{\E}{\mathbb{E}}

%parens

\newcommand{\norm}[1]{\left\|#1\right\|}

% Unscaled version

%\newcommand{\eqdef}{\stackrel{{\rm def}}{=}}
%\newcommand{\eqdef}{\stackrel{{\rm \tiny def}}{=}}

%Calligraphic Shorthands

\newcommand{\cD}{\mathcal{D}}

\newcommand{\cP}{\mathcal{P}}

\newcommand{\cX}{\mathcal{X}}

%Bolds Shorthands

\newcommand{\bk}{{\bf k}}

\newcommand{\by}{{\bf y}}

\renewcommand{\epsilon}{\varepsilon}
\renewcommand{\hat}{\widehat}
\renewcommand{\tilde}{\widetilde}
\renewcommand{\bar}{\overline}

\newcommand{\nothere}[1]{}

% bandits

%% from single papers, but merged here
% algo names
\usepackage{xspace}

%other algos

%bandits

%linear bandits

%graph bandits

%continuous bandits

%extreme bandits

%planning "bandits"

%polymatroid bandits

%kernelUCB

%\newcommand{\hm}{\hat{\mu}}

%graph bandit notation

%spectral

%wix

%bare

%continuous bandit notation
%\newcommand{\node}[2]{\circ[#1,#2]}

%%% todos
\usepackage[colorinlistoftodos, textwidth=30mm, shadow, textsize=small]{todonotes}
\definecolor{babyblue}{rgb}{0.54, 0.81, 0.94}
\definecolor{citrine}{rgb}{0.89, 0.82, 0.04}
\definecolor{misocolor}{rgb}{0.16,0.27,0.86}
 % Sharan's comments

\usepackage{etoolbox}

\begin{document}

\title{Combining Bayesian Optimization and Lipschitz Optimization}
\author{Mohamed Osama Ahmed  \and \\
	Sharan Vaswani  \and \\
	Mark Schmidt
}

\institute{Mohamed Osama Ahmed \at
	\email{moahmed@cs.ubc.ca}           %  \\
	\and
	Sharan Vaswani \at
	\email{vaswanis@mila.quebec} 
	\and
	Mark Schmidt \at
	\email{schmidtm@cs.ubc.ca} \\
}

\maketitle

\begin{abstract}
Bayesian optimization and Lipschitz optimization have developed alternative techniques for optimizing black-box functions. They each exploit a different form of prior about the function. In this work, we explore strategies to combine these techniques for better global optimization. In particular, \blu{we propose ways to use the Lipschitz continuity assumption within traditional BO algorithms, which we call Lipschitz Bayesian optimization (LBO). This approach does not increase the asymptotic runtime and in some cases drastically improves the performance (while in the worst-case the performance is similar).} Indeed, in a particular setting, we prove that using the Lipschitz information yields the same or a better bound on the regret compared to using Bayesian optimization on its own. Moreover, we propose a simple heuristics to estimate the Lipschitz constant, and prove that a growing estimate of the Lipschitz constant is in some sense ``harmless''. Our experiments on 15 datasets with 4 acquisition functions show that in the worst case LBO performs similar to the underlying BO method while in some cases it performs substantially better. Thompson sampling in particular typically saw drastic improvements (as the Lipschitz information corrected for its well-known ``over-exploration'' phenomenon) and its LBO variant often outperformed other acquisition functions.
\end{abstract}
% ----------------------------------------------------------------------------------------------------------- 
%the combination of techniques can result in better performance than using either one of them. We show that 
%We also show how to exploit any available gradient information by making use of the first-order Lipschitz condition. 
%We use techniques from Lipschitz optimization in order to improve existing Bayesian optimization (BO) methods. 
%For this, we first assume the Lipschitz constant to be known and then describe techniques to estimate the Lipschitz constant on the fly. 
%We show that using Lipschitz information not only improves the performance of the BO algorithm, but also makes it more robust to model misspecification. 
%Furthermore, this technique can be used with \emph{any} acquisition function or Lipschitz-continuous model class. 
%Our proposed modifications improve the performance of the underlying BO algorithm and can be used with \emph{any} acquisition function or smooth class of models. 
%Instead of the first sentence of the abstract, it should start by saying something like Bayesian optimization and Lipschitz optimization both having advantages and disadvantages. Then it should say something like we explore strategies that combine them to better methods.
%This provides a scalable alternative to existing techniques that incorporate gradients directly into the kernel matrix.
%!TEX root = ../main.tex
\section{Introduction}
Bayesian optimization (BO) has a long history and has been used in a variety of fields~\citep[see][]{shahriari2016taking}, with recent interest from the machine learning community in the context of automatic hyper-parameter tuning~\citep{snoek2012practical,golovin2017google}. BO is an example of a global black-box optimization algorithm~\citep{hendrix2010introduction,jones1998efficient,pinter1991global,rios2013derivative} which optimizes an unknown function that may not have nice properties such as convexity. In the typical setting, we assume that we only have access to a black box that evaluates the function and that it is expensive to do these evaluations. The objective is to find a global optimum of the unknown function with the minimum number of function evaluations. 

\gre{The global optimization of a real-valued function} is impossible unless we make assumptions about the structure of the unknown function. \reb{Lipschitz continuity assumes that the function can't change arbitrarily fast as we change the inputs.} This is one of the weakest assumptions under which optimizing an unknown function is still possible.
Lipschitz optimization~\citep{piyavskii1972algorithm,shubert1972sequential} (LO) exploits knowledge of a bound on the Lipschitz constant $L$ of the function. \reb{This constant $L$ specifically gives a bound on the maximum amount that the function can change (as the parameters change)}. This bound allows LO to prune the search space in order to locate the optimum. In contrast, Bayesian optimization, makes the assumption that the unknown function belongs to a \blu{known model class (typically a class of smooth functions)}, the most common being a Gaussian process (GP) \gre{generated using a Gaussian or Mat\'{e}rn kernel~\citep{stein2012interpolation}}. We review LO and BO in Section~\ref{sec:back}.

Under their own specific sets of additional assumptions, both BO~\citep[Theorem~5]{bull2011convergence} and LO~\citep{malherbe2017global} can be shown to be exponentially faster than random search strategies. If the underlying function is close to satisfying the stronger BO assumptions, then \gre{typically BO is able to optimize functions faster} than LO. However, when these assumptions are not reasonable, BO may converge slower than simply trying random values~\citep{li2016efficient,ahmed2016we}. On the other hand, LO makes minimal assumptions (not even requiring differentiability\footnote{The absolute value function $f(x) = |x|$ is an example of a simple non-differentiable but Lipschitz-continuous function.}) and simply prunes away values of the parameters that are not compatible with the Lipschitz condition and thus cannot be solutions. This is useful in speeding up simple algorithms like random search. Given a \gre{new function to optimize}, it is typically not clear which of these strategies will perform better. 

\blu{In this paper, we propose to combine BO and LO to exploit the advantages of both methods. We call this \emph{Lipschitz Bayesian Optimization} (LBO). Specifically, in Section~\ref{sec:lbo}, we design \emph{mixed acquisition functions} that use Lipschitz continuity in conjunction with existing BO algorithms. We also address the issue of providing a ``harmless''  estimate of the Lipschitz constant (see Section~\ref{sec:L-estimation}), which is an important practical issue \gre{ for any LO method}. Our experiments (Section~\ref{sec:experiments}) indicate that in some settings the addition of estimated Lipschitz information leads to a huge improvement over standard BO methods. This is particularly true for Thompson sampling, which often outperforms other standard acquisition functions when augmented with Lipschitz information. This seems to be because the estimated Lipschitz continuity seems to correct for the well-known problem of over-exploration~\citep{shahriari2014entropy}. Further, our experiments indicate that it does not hurt to use the Lipschitz information since even in the worst case it does not change the runtime or the performance of the method.}

\section{Background}
\label{sec:back}
We consider the problem of maximizing a real-valued function \gre{$f$ with parameters $x$} over a compact set $\cX$. 
%We denote a global maximum of $f$ as $\xopt$, implying that $\xopt = \argmax_{x \in \cX} f(x)$. The global optimization framework proceeds for $T$ iterations. In every iteration $t$, the algorithm chooses a point $x_{t} \in \cX$. It then observes feedback $y_{t}$ from a zero-order oracle that returns the corresponding function value $f(x_{t})$. 
We assume that on iteration $t$, % ($t = 1$:$T$), 
an algorithm chooses a point $x_t \in \cX$ and then receives the corresponding function value $f(x_t)$. Typically, our goal is to find the largest possible $f(x_t)$ across iterations. We describe two approaches for solving this problem, namely BO and LO, in detail below.

\subsection{Bayesian Optimization}
\label{sec:bo}
BO methods are typically based on Gaussian processes (GPs), since they have appealing universal consistency properties \gre{over compact sets} and admit a closed-form posterior distribution~\citep{rasmussen2006gaussian}. BO methods typically assume a smooth GP prior on the unknown function, and use the observed function evaluations to compute a posterior distribution over the possible function values at any point $x$. At iteration $t$, given the previously selected points $\{x_{1}, x_{2}, \ldots x_{t-1} \}$ and their corresponding observations $\by_{t} = [y_{1}, y_{2}, \ldots, y_{t-1}]$, the algorithm uses an \emph{acquisition function} (based on the GP posterior) to select the next point to evaluate. The value of the acquisition function at a point characterizes the importance of evaluating that point in order to maximize $f$. To determine $x_t$, we \gre{usually} maximize this acquisition function over all $x$ using an auxiliary optimization procedure (typically we can only approximately solve this maximization~\citep{wilson2018maximizing, kim2019local}).

We now formalize the above high-level procedure. We assume that \gre{$f$ follows a  $GP(0,k(x,x^\prime))$ distribution where} $k(x,x^\prime)$ is a kernel function which quantifies the similarity between points $x$ and $x^\prime$. Throughout this paper, we use the Mat\'{e}rn kernel for which $k(x,x^\prime) = %\frac{1}{2 \sigma^{2}} \exp \left( - \sqrt{5} r^{2} \right) \left( 1 + \sqrt{5}r + \frac{5 r^{2}}{3} \right)$ 
\reb{{\sigma_0}^{2}} \exp \left( - \sqrt{5} r^{2} \right) \left( 1 + \sqrt{5}r + \frac{5 r^{2}}{3} \right)$
where $r = \sum_{j=1}^d \frac{(x_j-x^\prime_j)^2} {\ell_j}$. Here the hyper-parameter $\ell_j$ is referred to as the  length-scale for dimension $j$ and dictates the extent of smoothness we assume about the function $f$ in direction $j$.  \reb{The hyper-parameter $\sigma_0$ represents the scale of the signal.}%quantifies the amount of noise we expect in the function values.} 
%where $r = \frac{\vert \vert x - x^\prime \vert \vert}{\ell}$. Here the hyper-parameter $\ell$ is referred to as the length-scale and dictates the extent of smoothness we assume about the function $f$. The hyper-parameter $\sigma$ quantifies the amount of noise we expect in the function values. 

We denote the maximum value of the function until iteration $t$ as $y^{*}_{t}$ and the set $\{ 1,2, \ldots , t \}$ as $[t]$. Let $\bk_{t}(x) = [k(x,x_{1}), k(x,x_{2}), \ldots, k(x,x_{t})]$ and let us denote the $t \times t$ kernel matrix as $K$ (so $K_{i,j} = k(x_{i}, x_{j})$ for all $ i,j \in [t]$). Given the function evaluations (observations), the posterior distribution at point $x$ after $t$ iterations is given as $\cP(f_{t}(x)) \reb{=} N(\mu_{t}(x),{\sigma_{t}}^{\reb{2}}(x))$. Here, the mean and standard deviation of the function at $x$ are given as:
\begin{align}
\mu_{t}(x) & = \bk_{t}(x)^{T} \left( K + \sigma^{2} I_{t} \right)^{-1} \by_{t}, \nonumber \\
{\sigma_{t}}^{\reb{2}}(x) & = k(x,x) - \bk_{t}(x)^{T} \left( K + \sigma^{2} I_{t} \right)^{-1} \bk_{t}(x). \label{eq:gp-posterior}
\end{align}
%As alluded to earlier, an acquisition function uses the above posterior distribution in order to select the next point to evaluate the function at. A number of acquisition functions have been proposed in the literature, with the most popular ones being based on upper-confidence bounds (UCB)~\citep{srinivas2009gaussian}\gre{, posterior sampling~\citep{thompson1933likelihood} e.g. Thompson sampling (TS), expected function values~\citep{movckus1975bayesian, kushner1964new} e.g. expected improvement (EI)~\citep{movckus1975bayesian} and probability of improvement (PI), and} entropy search~\citep{villemonteix2009informational, hennig2012entropy, hernandez2014predictive}. In this work, we focus on four simple widely-used acquisition functions: 
As alluded to earlier, an acquisition function uses the above posterior distribution in order to select the next point to evaluate the function at. A number of acquisition functions have been proposed in the literature, with the most popular ones: (UCB)~\citep{srinivas2009gaussian}, Thompson sampling (TS)~\citep{thompson1933likelihood}, expected improvement (EI)~\citep{movckus1975bayesian}, probability of improvement (PI)~\citep{kushner1964new}, and entropy search~\citep{villemonteix2009informational, hennig2012entropy, hernandez2014predictive}. In this work, we focus on four simple widely-used acquisition functions: 
%upper confidence bound (UCB)~\citep{srinivas2009gaussian}, Thompson sampling (TS)~\citep{thompson1933likelihood}, expected improvement (EI)~\citep{movckus1975bayesian} and probability of improvement (PI)~\citep{kushner1964new}. 
%UCB, Thompson sampling (TS)~\citep{thompson1933likelihood}, expected improvement (EI)~\citep{movckus1975bayesian} and probability of improvement (PI)~\citep{kushner1964new}.
UCB, TS, EI, and  PI.
However, we expect that our conclusions would apply to other acquisition functions. For brevity, when defining the acquisition functions, we drop the $(t-1)$ subscripts from $\mu_{t-1}(x)$, $\sigma_{t-1}(x)$, and $y^{*}_{t-1}$ .\\
\textbf{UCB}: The acquisition function $UCB(x)$ is defined as:
\begin{align}
UCB(x) = \mu(x) + \beta_{t}^{1/2} \sigma(x). \label{eq:UCB-def}
\end{align}
Here, $\beta_{t}$ is positive parameter that trades off exploration and exploitation. \\
\textbf{TS}: For TS, in each iteration we first sample a function $\tilde{f}_{t}(x)$ from the GP posterior\gre{, $\tilde{f}_{t} \sim GP(\mu_{t}(x), \sigma_{t}(x))$}. TS then selects the point $x_{t}$ which maximizes this deterministic function \gre{$\tilde{f}_{t}$}. \\
\textbf{PI}: We define the possible improvement (over the current maximum) at $x$ as \mikecommenta{ $I(x) = \max \{ \reb{f_x(x)} - y^{*}, 0\}$  and the indicator of improvement $u(x)$ as  \[
	u(x) =  
	\begin{cases}
	0, & \text{if } \reb{f_x(x)}  < y^* \\
	1,  & \text{if } \reb{f_x(x)}  \geq y^*
	\end{cases},
	\]} \reb{where $f_x(x) \sim$ $P(f_x)$}.
PI selects the point $x$ which maximizes %$\cP(I(x))$, 
the probability of improving over $y^{*}$. If \mikecommenta{ $\phi(\cdot)$ and $\Phi(\cdot)$ are the probability density function and the cumulative distribution function for the standard normal distribution}, then the PI acquisition function is given as~\citep{kushner1964new}:
%\begin{align}
%PI(x) = \Phi \left( z(x, \fbest) \right) \label{eq:PI-def},
%\end{align}
\begin{equation}\label{eq:PI-def}
\begin{split}
PI(x) & = \int_{-\infty}^{\infty} u(x)\phi(f_x(x))\mathrm{d}f_x 
%= \int_{y^*}^{\infty} \phi(f(x))df 
= \Phi \left( z(x, \fbest) \right).
\end{split}
\end{equation}
where we have defined %the transformation 
$z(u,v) = \frac{\mu(u) - v}{\sigma(u)}$.\\
\textbf{EI}: EI selects an $x$ that maximizes $\E[I(x)]$, where the expectation is over the distribution $\cP(f_{t}(x))$. If $\phi(\cdot)$ is the pdf of the standard normal distribution, the expected improvement acquisition function can be written as~\citep{movckus1975bayesian}:
%\begin{align}
%EI(x) = \int_{-\infty}^{\infty} I(x)\phi(x)dx \\
%=\sigma(x) \cdot \left[z(x, \fbest) \cdot \Phi( z(x,\fbest) ) + \phi( z(x,\fbest) ) \right]\label{eq:EI-def} 
%\end{align}
\begin{equation}\label{eq:EI-def}
\begin{split}
EI(x) & = \int_{-\infty}^{\infty} I(x)\phi(f_x(x))\mathrm{d}f_x \\
& = \int_{y^*}^{\infty} (f_x(x)-y^*)\phi(f_x(x))\mathrm{d}f_x \\
& =\sigma(x) \cdot \left[z(x, \fbest) \cdot \Phi( z(x,\fbest) ) + \phi( z(x,\fbest) ) \right].
\end{split}
\end{equation}

\subsection{Lipschitz Optimization}
\label{sec:lo}

As opposed to assuming that the function comes from a specific family of functions, in LO we simply assume that the function
cannot change too quickly as we change $x$. In particular, we say that a function $f$ is Lipschitz-continuous if for all $x$ and $x'$ we have
\begin{align}
\vert f(x) - f(x') \vert  \leq  L \vert \vert x - x' \vert \vert_{2}, \label{eq:zero-lipschitz} 
\end{align}
for \gre{ a constant $L$ which is referred to as the Lipschitz constant}. \reb{Note that unlike the typical priors used in BO (like the Gaussian or Mat\'{e}rn kernel), a function can be non-smooth and still be Lipschitz continuous.}

\reb{Lipschitz optimization methods consider the deterministic (noiseless) case, where $y_i = f(x_i)$. In this setting,}
Lipschitz optimization uses this Lipschitz inequality in order to test possible locations for the maximum of the function. In particular, at iteration $t$ the Lipschitz inequality implies that the function's value at any $x$ can be upper and lower bounded for any $i \in [t-1]$ by% the function value at point $x$ can be upper and lower bounded using equation~\ref{eq:zero-lipschitz} as follows: for all $i \in [t-1]$,
\[
f(x_{i}) - L \vert \vert x - x_{i} \vert \vert_{2} \leq f(x) \leq f(x_{i}) + L \vert \vert x - x_{i} \vert \vert_{2}.
\]
Since the above inequality holds simultaneously for all $i \in [t-1]$, for any $x$ the function value $f(x)$ can be bounded as:
\begin{align}
f^{l}_{t-1}(x) \leq f(x) \leq f^{u}_{t-1}(x), \nonumber  \text{ where,}
\end{align}
%where,
\begin{align}
f^{l}_{t-1}(x) = \max_{i \in [t-1]} \left\{ f(x_{i}) - L \vert \vert x - x_{i} \vert \vert_{2} \right\} \nonumber \\   
f^{u}_{t-1}(x) = \min_{i \in [t-1]} \left\{ f(x_{i}) + L \vert \vert x - x_{i} \vert \vert_{2} \right\} \label{eq:lip-bounds} 
\end{align}
Notice that if $f^u_{t-1}(x) \leq y_{t-1}^*$, then $x$ \emph{cannot} achieve a higher function value than our current maximum $y_{t-1}^*$.

\blu{To exploit these bounds, at each iteration of a typical Lipschitz optimization (LO) method, \citet{malherbe2017global} \gre{might sample} points $x_p$ uniformly at random from $\cX$ until it finds an $x_p$ that satisfies $f^{u}_{t-1}(x_{p}) \geq \fbest_{t-1}$. If we know the Lipschitz constant $L$ (or use a valid upper bound on the minimum $L$ value), this strategy may prune away large areas of the space while guaranteeing that we do not prune away any optimal solutions.} \gre{This can substantially decrease the number of function values needed to come close to the global optimum compared to using random points without pruning.}

\blu{A major drawback of Lipschitz optimization is that in most applications we do not know a valid $L$. We discuss this scenario in the next section, but first we note \gre{that there exist applications where we do} have access to a valid $L$. For example,~\citet{bunin2016lipschitz} discuss cases where $L$ can be dictated by the physical laws of the underlying process (e.g., in heat transfer,  solid oxide fuel-cell system, and  polymerization).  Alternately, if we have a lower and an upper bound on the possible values that the function can take, then we can combine this with the size of $\mathcal{X}$ to obtain an over-estimate of the minimum $L$ value. }
\subsection{Harmless Lipschitz Optimization}

\label{sec:L-estimation}
When our black-box functions arises from a real world process, a suitable value of $L$ is typically dictated by physical limitations of the process. However, in practice we often do not know $L$ and thus need to  estimate it. A simple way to obtain an under-estimate \gre{${L}_{t}^{lb}$} of $L$ at iteration $t$ is to use the maximum value that satisfies the Lipschitz inequality across all pairs of points,
\begin{align}
\gre{{L}^{lb}_{t}} = \max_{i,j \in [t]; x_i \neq x_j} \left\{\frac{ \vert f(x_{i}) - f(x_{j}) \vert  }{\vert \vert x_{i} - x_{j} \vert \vert_{2}} \right\}.
\label{eq:L0-estimate-lb}
\end{align}
Note that this estimate monotonically increases as we see more examples, but that it may be far smaller than the true $L$ value \reb{(and recall that we are considering the noiseless case where $f(x_i)=y_i$)}.
\blu{A common variation is to sample several points on a grid (or randomly) to use in the estimate above. Unfortunately, without knowing the Lipschitz constant we do not know how fine this grid should be so in general this may still \gre{significantly under-estimate the true quantity.}}
%  \mikecommenta{It is important to note that LO chooses  $x_p$ at random as  the scale of variation of the function is unknown so it is not enough to densely sample the space, since we do not know how fine the grid should be.}

\blu{A reasonable property of any estimate of $L$ that we use is that it is ``harmless'' in the sense of~\citet{ahmed2016we}. Specifically, the choice of $L$ should not make the algorithm converge to the global optimum at a slower speed than random guessing (in the worst case). If we have an over-estimate for the minimum possible value of $L$, then the LO algorithm is harmless as it can only prune values that cannot improve the objective function (although if we over-estimate it by too much then it may not prune much of the space). However, the common under-estimates of $L$ discussed in the previous paragraph are \emph{not} harmless since they may prune the global optima.}

\blu{We propose a simple solution to the problem that LO is not harmless if we don't have prior knowledge about $L$: we use a \emph{growing estimate of $L$}. The danger in using a growing strategy is that if we grow $L$ too slowly then the algorithm may not be harmless. However, in the appendix we show that LO is ``harmless'' for most reasonable strategies for growing $L$. This result is not prescriptive in the sense that \gre{it does not suggest} a practical strategy for growing $L$ (since it depends on the true $L$), but this result shows that even for enormous values of $L$ that an estimate would have to be growing exceedingly slowly in order for it to not be harmless (exponentially-slow  in the minimum value of $L$, the dimensionality, and the desired accuracy). 
}
\mikecommenta{ In our experiments we simply use  $L_{t}^{ub} = \gre{\kappa}  t \cdot L_{t}^{lb}$, the under-estimator multiplied by the \gre{(growing)} iteration number and a constant \gre{$\kappa$} (a tunable hyper-parameter).} In Section~\ref{sec:experiments}, we observe that this choice of $L_{t}^{ub}$ with \gre{$\kappa = 10$} consistently works well  across 14 datasets with 4 different acquisition functions.

\section{Lipschitz Bayesian optimization}
\label{sec:lbo}
In this section, we show how simple changes to the standard acquisition functions used in BO allow us to incorporate the Lipschitz inequality bounds. We call this Lipschitz Bayesian Optimization (LBO).  \gre{LBO}  prevents BO from \gre{considering} values of $x^t$ that  \mikecommenta{cannot} be global maxima \blu{(assuming we have over-estimated $L$)} and also restricts the range of $f(x_t)$ values considered in the acquisition function to those that are consistent with the Lipschitz \gre{inequalities}. 
Figure~\ref{fig:visualization} illustrates the \gre{key features of BO, LO,} and LBO.
\mikecommenta{ It is important to note that \gre{the Lipschitz constant $L$ has a different interpretation than} the length-scale $\ell$ of the GP. The constant $L$ specifies an absolute maximum rate of change for the function, while $\ell$ specifies \gre{how quickly a parameterized distance between pairs of points changes the GP}. } \gre{We also note that  the} computational complexity of using \gre{the Lipschitz inequalities} is ${O}(n^2)$ which is \reb{the same cost as (exact) inference in the GP (using matrix factorization updates)}.
%cheaper than the $O(n^3)$  cost of (exact) inference in the GP\footnote{\reb{The  cost for this operation can be reduced to ${O}(n^2)$ using matrix factorization updates} }.

\ylow{We can use the Lipschitz bounds to restrict the limits of the unknown function value for computing the improvement. The upper bound \reb{$U_f$} will always be $f^{u}(x)$, while the lower bound \reb{$L_f$} will depend on the relative value of $\fbest$. In particular, we have the following two cases:}
\[
\reb{L_f} =  
\begin{cases}
%f^{l}(x), & \text{if } \fbest \in \left(-\infty,f^{l}(x) \right) \\
\fbest,  & \text{if } \fbest \in \left(f^{l}(x),f^{u}(x) \right) \\
f^{u}(x), & \text{if } \fbest \in \left(f^{u}(x),\infty \right) 
\end{cases}.
\]
\ylow{The second case represents points that cannot improve over the current best value (that are ``rejected'' by the Lipschitz inequalities).}\\
\textbf{Truncated-PI:} We can define a similar variant for the PI acquisition function as \footnote{\reb{Note that the only difference between the usual PI/EI and the truncated version is changing the integral limits to $(L_f,U_f)$ instead of $(-\infty, \infty$).}}:
\begin{align}
TPI(x) = \Phi \left( z(x, \reb{L_f}) \right) - \Phi \left( z(x, \reb{U_f}) \right).
\label{eq:TPI-def}
\end{align}
%\gre{*********** THE ABOVE DOES NOT MAKE SENSE  **************}\\
\textbf{Truncated-EI:} Using the above bounds, the truncated expected improvement for point $x$ is given by:
\begin{align}
 TEI(x) & = - \sigma(x) \cdot z(x,\fbest) \left[\Phi(z(x,\reb{L_f})) - \Phi(z(x,\reb{U_f}) \right] \nonumber \\ 
& + \sigma(x) \cdot \left[ \phi(z(x,\reb{L_f}) - \phi(z(x,\reb{U_f}) \right].
\label{eq:TEI-def}
\end{align}

%\mikecommenta {Note that substituting $U = \infty$ and $L = \fbest$ in the above equations recover the usual PI and EI methods in equations~\eqref{eq:PI-def} and~\eqref{eq:EI-def} respectively.} 
\mikecommenta {Note that \gre{removing the Lipschitz bounds corresponds to using} $f^{l}(x) = -\infty$ and $f^{u}(x) = \infty$, \gre{and in this case} we recover the usual PI and EI methods in \gre{E}quations~\eqref{eq:PI-def} and~\eqref{eq:EI-def}, respectively.}\\
\begin{figure*}
	\centering
	\includegraphics[width=0.95\textwidth, height=0.35\textwidth]{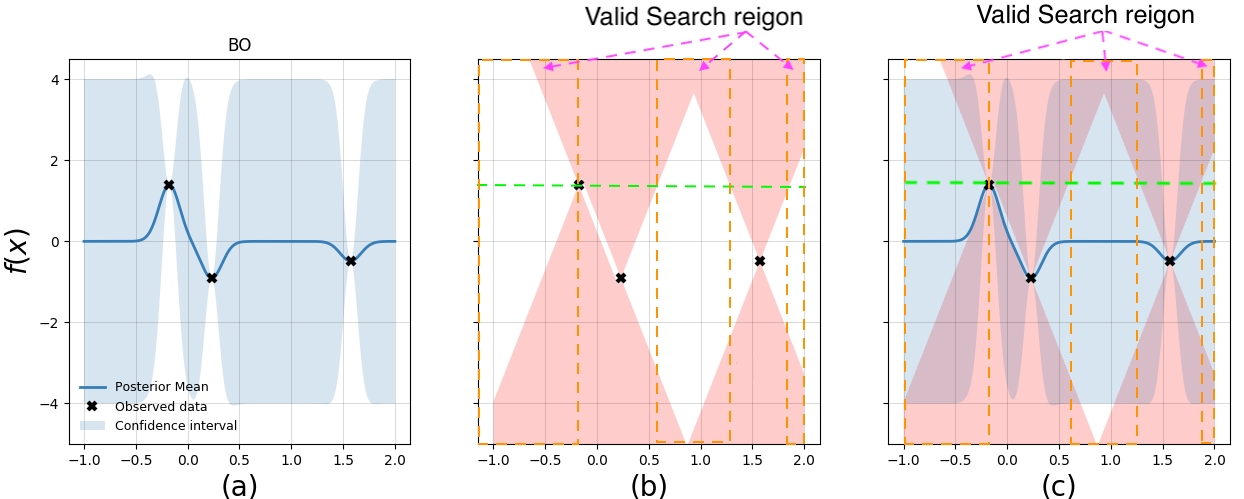}
	\caption{\gre{Visualization of the effect of incorporating the Lipschitz bounds to BO. a) Shows the posterior mean and confidence interval of the conventional BO. b) The red color represents the regions  of the space that are excluded by the Lipschitz bounds. c) Shows the effect of LBO.
		The Grey color represents the uncertainty. Using LBO helps cuts off regions where the posterior variance is high, 
		%to the dotted lines 
		which prevents over-exploration in unnecessary parts of the space.}  }
	\label{fig:visualization}
\end{figure*}
\textbf{Truncated-UCB:} \ylow{The same strategy can be applied to UCB as follows:}
\begin{align}
TUCB(x) = \min \left\{ \mu(x) + \beta_{t}^{1/2} \sigma(x), f^{u}(x) \right\}.
\label{eq:TPI-def}
\end{align}
\textbf{Accept-Reject:} 
An alternative strategy to incorporate the Lipschitz bounds is to use an accept-reject based mixed acquisition function.
This approach uses the Lipschitz bounds as a sanity-check to accept or reject the value provided by the original acquisition function, similar to LO methods. Formally, if $g(x)$ is the value of the original acquisition function (e.g. $g(x) = UCB(x)$ or $g(x) = \tilde{f}(x)$ for TS), then the mixed acquisition function $\bar{g}(x)$ is given as follows:
\[
\bar{g}(x) =  
\begin{cases}
g(x), & \text{if } g(x) \in [f^{l}(x), f^{u}(x)] \text{ (Accept)} \\
-\infty,  & \text{othewise } \text{ (Reject)}\\
\end{cases}.
\]
%We explored this strategy for the UCB and TS acquisition functions, 
%\ylow{This strategy is more suitable for the UCB and TS acquisition functions as they are on the same scale as that of the function $f$}. 
We refer to the accept-reject based mixed acquisition functions as AR-UCB and AR-TS, respectively. Note that the accept-reject method is quite generic and can be used with any acquisition function that has values on the same scale as that of the function. \gre{When using an estimate of $L$ it is possible that a good point could be rejected because the estimate of $L$ is too small, but using a growing estimate ensures that such points can again be selected on later iterations.}

\subsection{Regret bound for AR-UCB}
\label{sec:LBO-theory}
In this section, we show that under reasonable assumptions, AR-UCB is provably ``harmless'', in the sense that it retains the good theoretical properties of GP-UCB. We prove the following theorem under the following assumptions:
\begin{itemize}
\item[1] The GP is correctly specified and with infinite observations, the posterior distribution will collapse to the ``true'' function $f$. 
\item[2] The noise in the observations $\sigma$ is small enough for the Lipschitz bounds in Equations~\ref{eq:lip-bounds} to hold.
\item[3] The Lipschitz constant $L$ is known or has been over-estimated using the techniques described in Section~\ref{sec:L-estimation}. 
\end{itemize}
Assumption $1$ is a common assumption made for providing theoretical results for GP-UCB~\citep{srinivas2009gaussian}. Under these assumptions, we obtain the following theorem (proved in Appendix~\ref{app:regret}):
\begin{theorem}
Let $\cD$ be a finite decision space and $\sigma$ be the standard deviation of the noise in the observations. Let $\pi_t$ be a positive scalar such that $\sum_{t} \pi_{t}^{-1} = 1$ and $\delta \in (0,1)$. If we use the AR-UCB algorithm with $\bt = 2 \log(\vert \cD \vert \pi_{t}/\delta)$ assuming that the above conditions $1$-$3$ hold, then the expected cumulative regret $R(T)$ can be bounded as follows:
\begin{align*}
R(T) & \leq \left( 8 / \log(1 + \sigma^{-2}) \right) \beta_{T} \gamma_{T} \sqrt{T}.
\end{align*}
Here, $\gamma_{T}$ refers to the information gain for the selected points and depends on the kernel being used. For the squared exponential kernel, we obtain the following specific bound:
\begin{align*}
R(T) & \leq \left( 8 / \log(1 + \sigma^{-2}) \right) \beta_{T} (\log(T))^{d+1} \sqrt{T}.
\end{align*}
\end{theorem}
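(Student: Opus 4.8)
The plan is to show that the accept–reject modification never changes the point that GP-UCB would have selected, so the regret analysis of \citet{srinivas2009gaussian} goes through verbatim. Concretely, AR-UCB evaluates $g(x) = UCB(x) = \mu(x) + \bt\sigma(x)$ and then sets $\bar g(x) = g(x)$ if $g(x) \in [f^l(x), f^u(x)]$ and $-\infty$ otherwise. Under Assumptions 1–3 the Lipschitz bounds $f^l_{t-1}(x) \leq f(x) \leq f^u_{t-1}(x)$ are valid, and under Assumption 1 the posterior is calibrated, so the high-probability confidence band used in the GP-UCB proof, namely $\mu(x) - \bt\sigma(x) \leq f(x) \leq \mu(x) + \bt\sigma(x)$ for all $x$ and all $t$ simultaneously (this is exactly where $\bt = 2\log(|\cD|\pi_t/\delta)$ and $\sum_t \pi_t^{-1} = 1$ enter, via a union bound over $\cD$ and over $t$), holds with probability at least $1-\delta$. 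First I would state this confidence-band lemma (citing the corresponding lemma of \citet{srinivas2009gaussian}) and condition on the event $\cE$ that it holds.

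Next I would argue that on $\cE$ the acceptance test is never triggered in a way that discards the GP-UCB choice. There are two sub-claims. (i) The GP-UCB value at the true maximizer $\xopt$ is never rejected: since $f(\xopt) \leq UCB(\xopt)$ on $\cE$ and $f(\xopt) \leq f^u(\xopt)$ by the Lipschitz bound, and since $f^u(\xopt) \geq \fbest_{t-1}$ is not automatic, I instead compare against the point actually selected. The cleaner route is: let $x_t^{\mathrm{UCB}} = \argmax_x UCB(x)$ be the unmodified GP-UCB selection. If $UCB(x_t^{\mathrm{UCB}}) \leq f^u_{t-1}(x_t^{\mathrm{UCB}})$, the point is accepted and $x_t = x_t^{\mathrm{UCB}}$. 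If instead $UCB(x_t^{\mathrm{UCB}}) > f^u_{t-1}(x_t^{\mathrm{UCB}})$, then on $\cE$ we have $f(x_t^{\mathrm{UCB}}) \leq f^u_{t-1}(x_t^{\mathrm{UCB}}) < UCB(x_t^{\mathrm{UCB}})$; I then want to show the newly selected point $x_t$ still satisfies the per-step regret bound $f(\xopt) - f(x_t) \leq 2\bt\sigma_{t-1}(x_t)$ that drives the whole proof. This follows because $x_t$ maximizes $\bar g$, so $\bar g(x_t) \geq \bar g(\xopt)$; and $\xopt$ is accepted (its UCB value lies below $f^u(\xopt)$, since $UCB(\xopt) \geq f(\xopt)$ could in principle exceed $f^u(\xopt)$ — this is the delicate case and must be handled by noting that either $\xopt$ is accepted, giving $UCB(x_t) \geq UCB(\xopt) \geq f(\xopt)$, or $\xopt$ is rejected, in which case $f^u(\xopt) < UCB(\xopt)$ but also $f(\xopt) \leq f^u(\xopt) \leq \fbest_{t-1} \leq f(x_t)$ for some earlier accepted point, so $\xopt$ gives zero instantaneous regret anyway). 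Collecting the cases, on $\cE$ we always have $f(\xopt) \leq \max\{UCB(x_t), \fbest_{t-1}\}$, and in either case $f(\xopt) - f(x_t) \leq 2\bt\sigma_{t-1}(x_t)$ using $\bt$ nondecreasing.

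From that instantaneous-regret inequality the rest is the standard GP-UCB accounting: bound $\sum_{t=1}^T \bt^2 \sigma_{t-1}^2(x_t) \leq C_1 \bt^2 \gamma_T$ with $C_1 = 2/\log(1+\sigma^{-2})$ using the information-gain argument, apply Cauchy–Schwarz to get $R(T) = \sum_t (f(\xopt)-f(x_t)) \leq \sqrt{T \sum_t 4\bt^2\sigma_{t-1}^2(x_t)} \leq \sqrt{C_1 T} \, \bt \sqrt{\gamma_T} \cdot 2\sqrt{2}$, and absorb constants to reach $R(T) \leq (8/\log(1+\sigma^{-2})) \beta_T \gamma_T \sqrt{T}$ (slightly loosening $\sqrt{\gamma_T}$ to $\gamma_T$ since $\gamma_T \geq 1$). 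The squared-exponential specialization then just substitutes the known bound $\gamma_T = O((\log T)^{d+1})$ from \citet{srinivas2009gaussian}. I would also note the $\delta$-probability event can be converted to an expected-regret statement because the regret is trivially bounded (finite $\cD$, bounded $f$) on the complement.

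The main obstacle is the case analysis in the second paragraph: showing that rejecting the GP-UCB maximizer never destroys the $f(\xopt) - f(x_t) \leq 2\bt\sigma_{t-1}(x_t)$ guarantee. The key insight making it work is that a point $x$ is rejected by AR-UCB only when $f^u_{t-1}(x) < UCB(x)$, and if the optimizer $\xopt$ itself is ever rejected then $f(\xopt) \leq f^u_{t-1}(\xopt) \leq \fbest_{t-1}$, i.e. we have already found a point at least as good as $\xopt$ — which is impossible unless $x_t$ is also essentially optimal, collapsing the instantaneous regret. Making this airtight (in particular ruling out pathologies where $f^u_{t-1}(\xopt)$ is below $\fbest_{t-1}$ yet $\fbest_{t-1} < f(\xopt)$, which the validity of the Lipschitz bounds under Assumptions 2–3 forbids) is the crux; everything downstream is a verbatim replay of the GP-UCB proof.
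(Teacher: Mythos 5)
Your overall route coincides with the paper's: condition on the simultaneous confidence band $\vert f(x) - \mut(x)\vert \leq \bt\sigmat(x)$ (Lemma~5.1 of \citet{srinivas2009gaussian}, which is exactly where $\bt = 2\log(\vert\cD\vert\pi_t/\delta)$ and $\sum_t\pi_t^{-1}=1$ enter), combine it with the Lipschitz bounds to get the per-step inequality $f(\xopt)-f(\xt)\leq 2\bt\sigmat(\xt)$, and then sum using Lemma~5.4 of the same paper. The paper additionally carries a second term $\fu(\xt)-\fl(\xt)\leq 2L\min_{i\in[t-1]}\Vert \xt-x_i\Vert_2$ inside a minimum before discarding it in the final bound, so your omission of that term is immaterial to the stated result.

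Where you depart from the paper --- the case analysis for rejected points --- there is a genuine error. You claim that if $\xopt$ is rejected then $f(\xopt)\leq \fu(\xopt)\leq \fbest_{t-1}$, i.e.\ that rejection certifies $\xopt$ cannot improve on the incumbent. That is the Lipschitz-optimization pruning criterion, not the accept--reject rule actually defined for AR-UCB: a point $x$ is rejected when $UCB(x)\notin[\fl(x),\fu(x)]$, which on the good event (where $UCB(x)\geq f(x)\geq \fl(x)$) reduces to $UCB(x)>\fu(x)$. This compares the posterior upper confidence bound with the Lipschitz upper bound and implies nothing about $\fu(\xopt)$ versus $\fbest_{t-1}$, so the ``rejected $\xopt$ gives zero instantaneous regret'' branch of your argument does not go through. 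To be fair, the paper's own proof does not close this case either: it simply asserts $\fl(\xt)\leq\mut(\xt)+\bt\sigmat(\xt)\leq\fu(\xt)$ for the selected point and $\mut(\xt)+\bt\sigmat(\xt)\geq\mut(\xopt)+\bt\sigmat(\xopt)$ ``by the definition of the UCB rule,'' the latter being valid from $\xt=\argmax_x\bar g(x)$ only when $\xopt$ is itself accepted. A correct repair must either show that $\xopt$ is never rejected under Assumptions~1--3, or replace your $\fbest_{t-1}$ comparison by an argument using the quantity $\min\{UCB(x),\fu(x)\}$, which upper-bounds $f(x)$ everywhere on the good event and restores the chain of inequalities; the remainder of your write-up (Lemma~5.4, Cauchy--Schwarz, loosening $\sqrt{\gamma_T}$ to $\gamma_T$, and the $\gamma_T=O((\log T)^{d+1})$ specialization) is a faithful replay of the paper's accounting.
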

The $\gamma_{T}$ term can also be bounded for the Mat\'{e}rn kernel following~\citet{srinivas2009gaussian}. The above theorem shows that under reasonable assumptions, using the Lipschitz bounds in conjunction with GP-UCB cannot result in worse regret. We empirically show that if $L$ is over-estimated, then AR-UCB matches the performance of GP-UCB  in the worst case. 

Note that the above theorem assumes that the GP is correctly specified with the correct hyper-parameters. It also assumes that we are able to specify the correct value of the trade-off parameter $\bt$. These assumptions are not guaranteed to hold in practice and this may result in worse performance of the GP-UCB algorithm. In such cases, our experiments show that using the Lipschitz bounds can lead to better empirical performance than the original GP-UCB.
\section{Experiments}
\label{sec:experiments}

\textbf{Datasets:} We perform an extensive experimental evaluation and present results on twelve synthetic datasets and three real-world tasks. For the synthetic experiments, we use the standard global-optimization benchmarks namely the Branin, Camel, Goldstein Price, Hartmann (2 variants), Michalwicz (3 variants) and  Rosenbrock (4 variants). The closed form and domain for each of these functions is given in~\citet{jamil2013literature}. As examples of real-world tasks, we consider tuning the parameters for a \emph{robot-pushing} simulation (2 variants)~\citep{wang2017max} and tuning the hyper-parameters for logistic regression~\citep{wu2017bayesian}. For the robot pushing example, our aim is to find a good pre-image~\citep{kaelbling2017pre} in order for the robot to push the object to a pre-specified goal location. We follow the experimental protocol from~\citet{wang2017max} and use the negative of the distance to the goal location as the black-box function to maximize. We consider tuning the robot position $r_x, r_y$, and duration of the push $t_{r}$ for the 3D case. We also tune the angle of the push $\theta_{r}$ to make it a 4 dimensional problem. For the hyper-parameter tuning task, we consider tuning the strength of the $\ell_{2}$ regularization (in the range  $[10^{-7},0.9]$), the learning rate for stochastic gradient descent (in the range $[10^{-7},0.05]$)\reb{,} and the number of passes over the data (in the range $[2,15]$). The black-box function is the negative loss on the test set (using a train/test split of $80\%/20\%$) for the MNIST dataset. \\
\textbf{Experimental Setup:} For Bayesian optimization, we use a Gaussian Process prior with the Mat\'{e}rn kernel (with a different length scale for each dimension). We modified the publically available BO package \emph{pybo} of~\citet{hoffmanmodular} to construct the mixed acquisition functions. All the prior hyper-parameters were set and updated across iterations according to the open-source Spearmint package\footnote{https://github.com/hips/spearmint}.%\footnote{Available at https://github.com/JasperSnoek/spearmint}
In order to make the optimization invariant to the scale of the function values, similar to Spearmint, we standardize the function values; after each iteration, we centre the observed function values by subtracting their mean and dividing by their standard deviation. We then fit a GP to these rescaled function values and correct for our Lipschitz constant estimate by dividing it by the standard deviation. %\mikecommenta{This may cause the method that is using the True Lipschitz constant not to be always the best as it we do not know the true standard deviation}. 
We use DIRECT~\citep{jones1993lipschitzian} in order to optimize the acquisition function in each iteration. \blu{This is one of the standard choices in current works on BO~\citep{eric2008active, martinez2007active, mahendran2012adaptive}, but we expect that Lipschitz information could improve the performance under other choices of the acquisition function optimization approach such as discretization~\citep{snoek2012practical}, adaptive grids~\citep{bardenet2010surrogating}, and other gradient-based methods~\citep{hutter2011sequential, lizotte2012experimental}.%, kawaguchi2015bayesian}.
}
\blu{In order to ensure that Bayesian optimization does not get stuck in sub-optimal maxima (either because of the auxiliary optimization or a ``bad'' set of hyper-parameters), on every fourth iteration of BO (or LBO) we choose a random point to evaluate rather than optimizing the acquisition function. This makes the optimization procedure ``harmless'' in the sense that BO (or LBO) will not perform worse than random search~\citep{ahmed2016we}. This has become common in recent  BO methods such as~\citet{bull2011convergence, hutter2011sequential}; and  \citet{falkner2017combining}, and to make the comparison fair we add this ``exploration'' step to all methods.
 Note that in the case of LBO we may need to reject random points until we find one satisfying the Lipschitz inequalities (this does not require evaluating the function).
}
In practice, we found that both the standardization and iterations of random exploration are essential for good performance.\footnote{Note that we verified that our baseline version of BO performs better than or equal to Spearmint across benchmark problems.} All our results are averaged over $10$ independent runs, and each of our figures plots the mean and standard deviation of the absolute error (compared to the global optimum) versus the number of function evaluations. For functions evaluated on log scale, we show the 10\textsuperscript{th} and 90\textsuperscript{th} quantiles.\\ %or the distance to the optima 
\textbf{Algorithms compared:} We compare the performance of Random search, BO, and LBO methods (using both estimated and \emph{True} \reb{Lipschitz constant} $L$) for the EI, PI, UCB and TS acquisition functions. The \emph{True} $L$ was estimated offline using a large number of random points.
For UCB, we set the trade-off parameter $\beta$ according to~\citet{kandasamy2017asynchronous}. For EI and PI, we use Lipschitz bounds to truncate the range of function values for calculating the improvement and use the LBO variants TEI and TPI respectively. For UCB and TS, we use the accept-reject strategy and evaluate the LBO variants AR-UCB and AR-TS respectively. In addition to these, we use random exploration 
%and basic Lipschitz optimization (LO)~\citep{malherbe2017global} 
as another baseline. We chose the hyper-parameter $\kappa$ (that controls the extent of over-estimating the Lipschitz constant) on the Rosenbrock-4D function  and use the best value of $\kappa$ for all the other datasets and acquisition functions for both BO and LBO. In particular, we set  $\kappa =10$. \\

\textbf{Results:} To make the results easier to read, we divide the results into the following groups:
\begin{enumerate}
	\item %Functions where 
	LBO provides huge improvements over BO shown in Figure~\ref{fig:kill}. Overall, this represents $21\%$ of all the  test cases.
	\item %Functions where 
	LBO provides improvements over BO shown in Figure~\ref{fig:improve}. Overall, this represents $9\%$ of all the  test cases.
	\item %Functions where 
	LBO performs similar to BO shown in %Figure~[\ref{fig:robot-4D-PI},
	 \ref{fig:robot-4D-UCB}. Overall, this represents $60\%$ of all the  test cases.
	\item %Functions where 
	LBO performs slightly worse than BO shown in Figure~\ref{fig:rosenbrock-4D-PI}. Overall, this represents $10\%$ of all the  test cases.
	\end{enumerate}

\begin{figure*}[!ht]
	
	\centering
	\subfigure[Michalwicz 5D-TS]
	{
		\includegraphics[width=0.31\textwidth,height=0.31\textwidth]{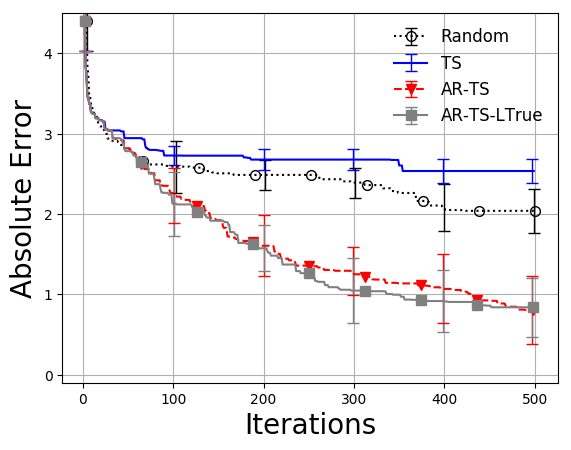}
		\label{fig:michalewicz-5D-TS}
	}
	\subfigure[Rosenbrock 3D-TS]
	{
		\includegraphics[width=0.31\textwidth, height=0.31\textwidth]{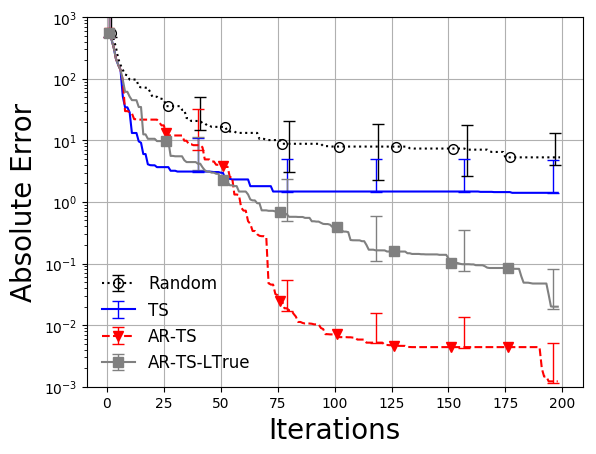}
		\label{fig:rosenbrock-3D-TS}
	}
	\subfigure[Robot pushing 3D-TS]
	{
		\includegraphics[width=0.31\textwidth,height=0.31\textwidth]{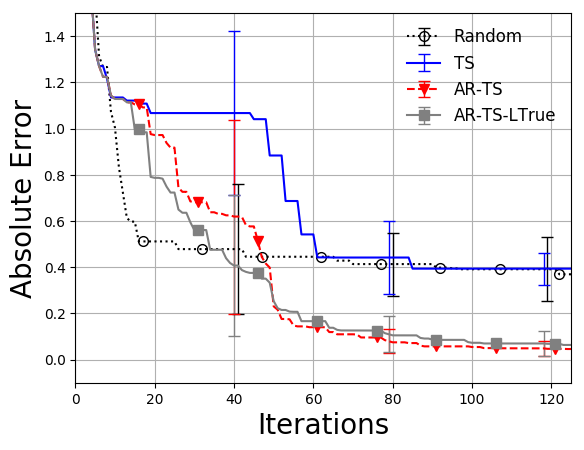}
		\label{fig:robot-3D-TS}
	}	
	\\
	\subfigure[Goldstein 2D-EI]
	{
		\includegraphics[width=0.31\textwidth,height=0.31\textwidth]{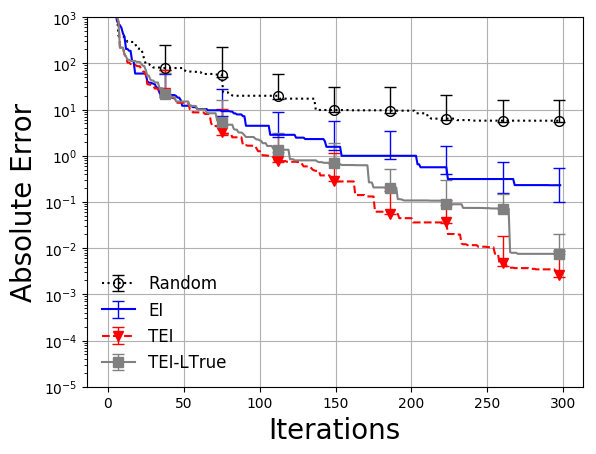}
		\label{fig:rosenbrock-2D-PI}
	}
	\subfigure[Hartmann 3D-EI]
	{
		\includegraphics[width=0.31\textwidth,height=0.31\textwidth]{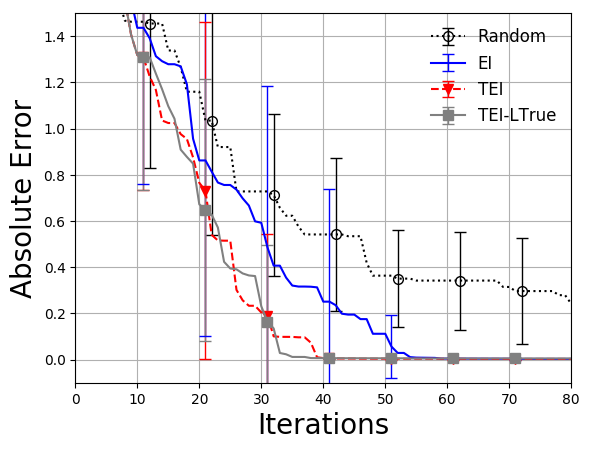}
		\label{fig:hartmann-3D-EI}
	}
	\subfigure[Rosenbrock 5D-UCB]
	{
		\includegraphics[width=0.31\textwidth,height=0.31\textwidth]{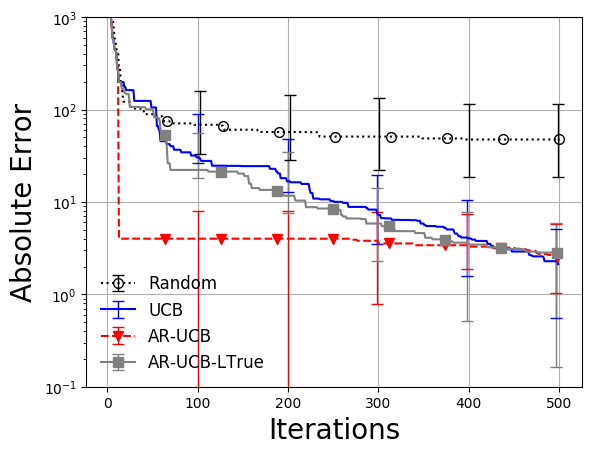}
		\label{fig:rosenbrock-5D-UCB}
	}
	
	\caption{ Examples of functions where LBO provides huge improvements over BO for the different acquisition functions. The figure also shows the performance of random search and LBO using the \emph{True} Lipschitz constant.
	}
	\label{fig:kill}
\end{figure*}
\begin{figure*}[!ht]
	
	\centering
	\subfigure[Rosenbrock 2D-UCB]
	{
		\includegraphics[width=0.31\textwidth,height=0.31\textwidth]{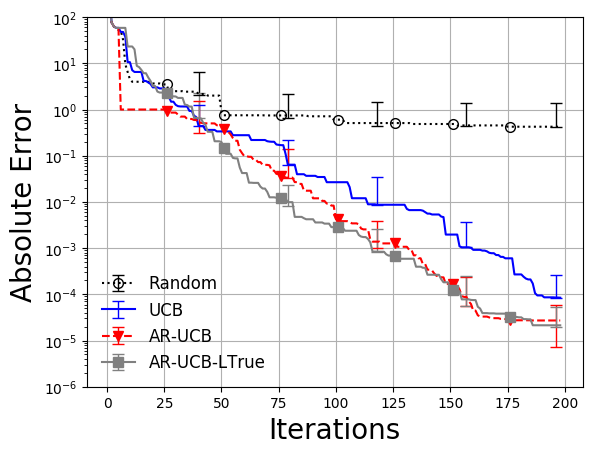}
		%		\label{fig:rosenbrock-2D-UCB}
		\label{fig:improve}
	}
	%	\subfigure[Robot pushing 4D-PI]
	%	{
	%		\includegraphics[width=0.29\textwidth,height=0.31\textwidth]{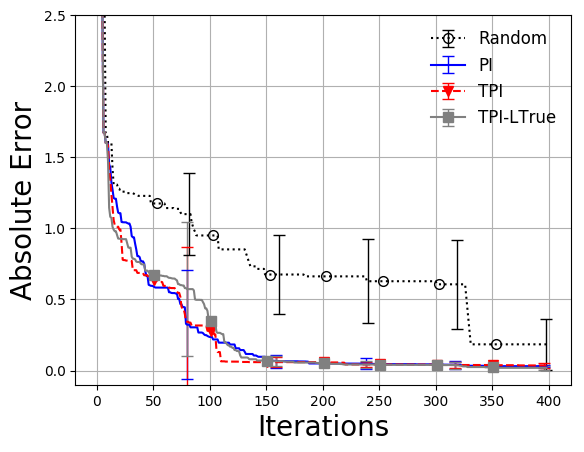}
	%		\label{fig:robot-4D-PI}
	%	}
	\subfigure[Robot pushing 4D-UCB]
	{
		\includegraphics[width=0.31\textwidth,height=0.31\textwidth]{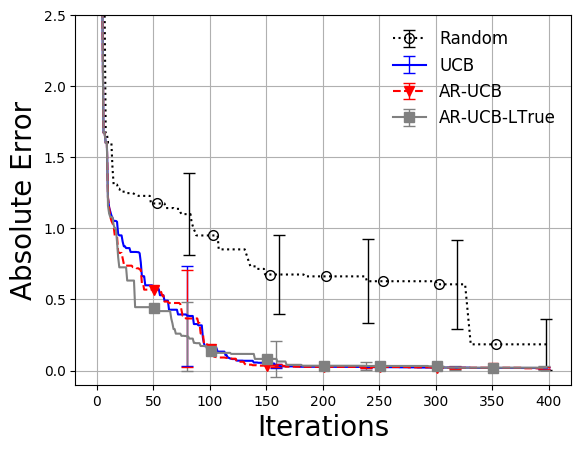}
		%		\label{fig:improve}
		\label{fig:robot-4D-UCB}
	}	
	\subfigure[Rosenbrock 4D-PI]
	{
		\includegraphics[width=0.31\textwidth,height=0.31\textwidth]{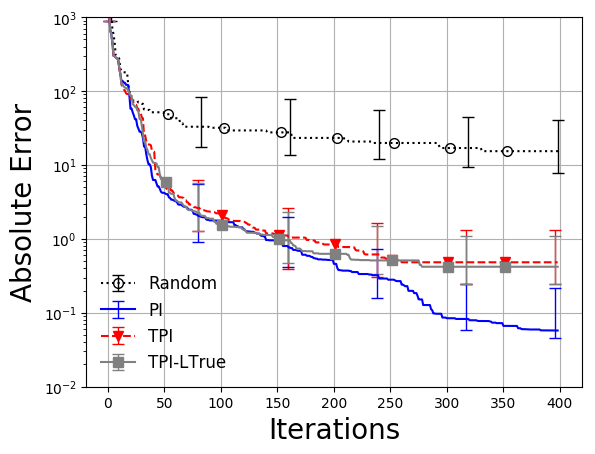}
		\label{fig:rosenbrock-4D-PI}
	}
	
	%	\caption{Examples of functions where LBO performs similar to BO for the different acquisition functions.}
	\caption{Examples of functions where LBO provides some improvement over BO (case a),  LBO performs similar to BO (case b), and  BO  performs slightly better than LBO  (case c).}
	\label{fig:same}
\end{figure*}

\begin{figure*}[!ht]
	\centering
	
	\subfigure[Michalwicz 5D]
	{
		\includegraphics[width=0.31\textwidth,height=0.31\textwidth]{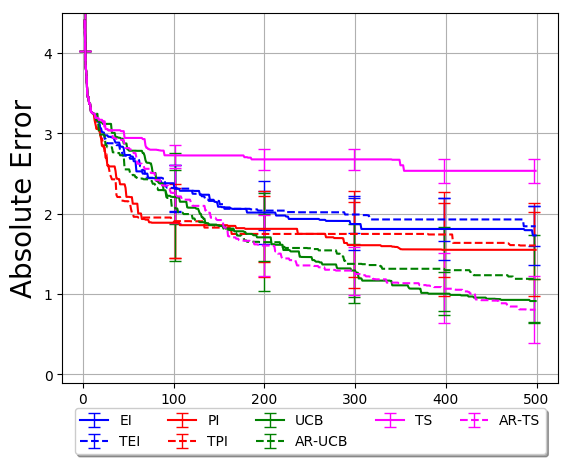}
		\label{fig:joint-mich-5D}
	}
	\subfigure[Rosenbrock 2D]
	{
		\includegraphics[width=0.31\textwidth,height=0.31\textwidth]{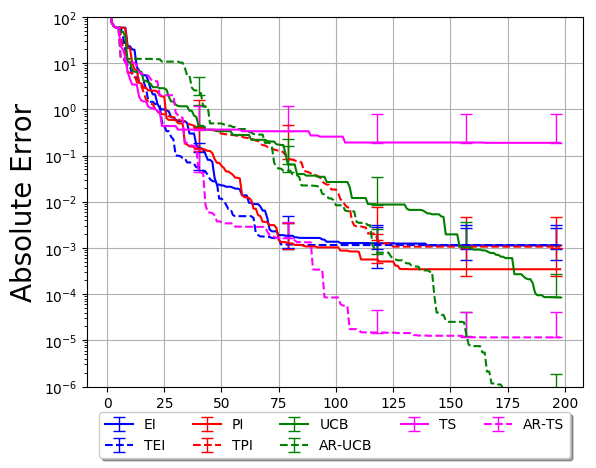}
		\label{fig:joint-rosen-2D}
	}
	\subfigure[Camel 2D-UCB]
	{
		\includegraphics[width=0.31\textwidth,height=0.31\textwidth]{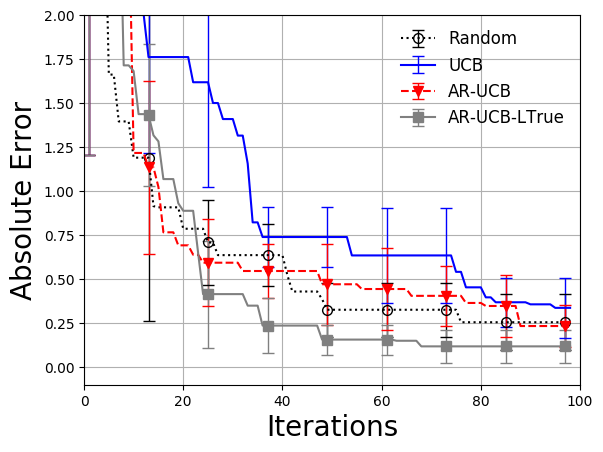}
		\label{fig:neg_camel-2-UCB_large_ucb}	
		%		\label{fig:rosenbrock-4D-EI}
	}
%	\subfigure[Rosenbrock 3D]
%	{
%		\includegraphics[width=0.29\textwidth,height=0.31\textwidth]{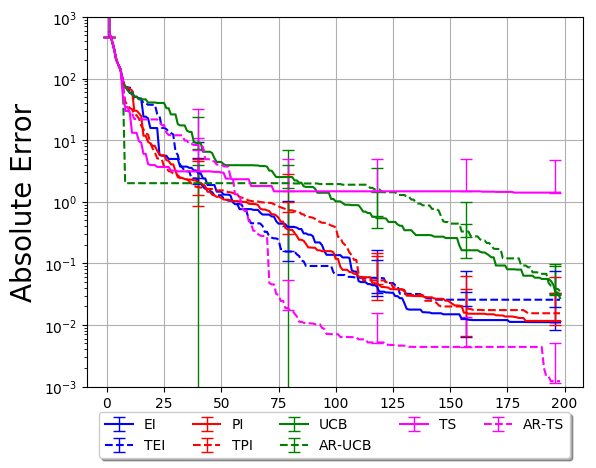}
%		\label{fig:joint-rosen-3D}
%	}
	
	\caption{(a, b) Examples of functions where LBO boosts the performance of BO with TS (Better seen in color). (c) Example where LBO outperforms BO with UCB when the $\beta$ parameter is too large ($\beta = 10^{16}$). }
	\label{joint}
\end{figure*}

A comparison of the performance across different acquisition functions (for both BO and LBO) on some of the  functions is shown in Figure~\ref{joint}, where we also show an example of UCB where $\beta$ is misspecified. The plots for all functions and methods are available in Appendix C. %~\ref{app:exps}.
 From these experiments, we can observe:% the following:

%\begin{figure*}[!ht]
%	
%	\centering
%	\subfigure[Camel 2D with UCB]
%	{
%		\includegraphics[width=0.29\textwidth,height=0.31\textwidth]{large/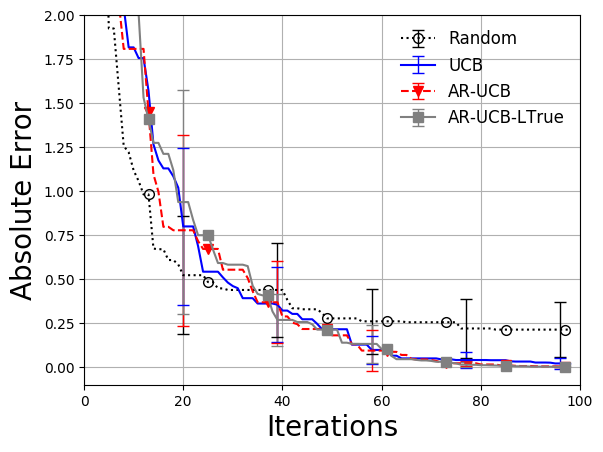}
%		%		\label{fig:rosenbrock-4D-EI}
%	}
%	\subfigure[Michalwicz 5D with UCB]
%	{
%		\includegraphics[width=0.29\textwidth,height=0.31\textwidth]{large/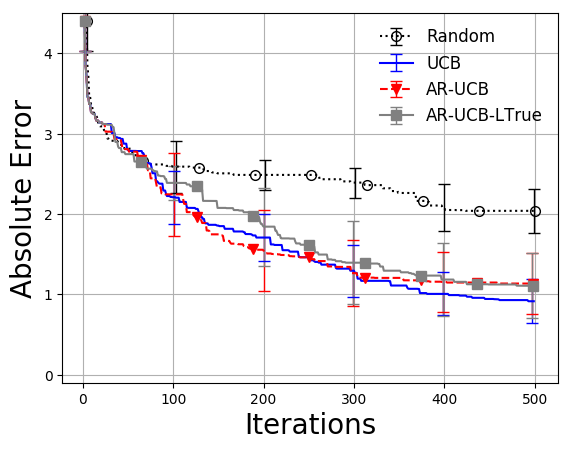}
%		%		\label{fig:rosenbrock-4D-PI}
%	}
%	\subfigure[Rosenbrock 5D with UCB]
%	{
%		\includegraphics[width=0.29\textwidth,height=0.31\textwidth]{large/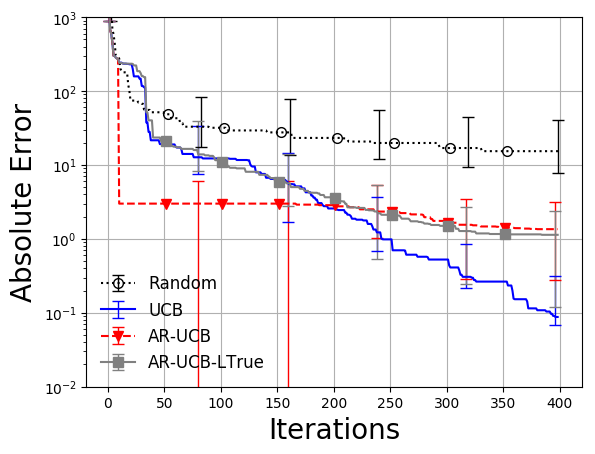}
%		%		\label{fig:rosenbrock-4D-PI}
%	}
%	\caption{Examples of functions where BO saves the performance of BO with UCB when the $\beta$ parameter is too large.}
%	\label{fig:large_ucb}
%\end{figure*}

\begin{itemize}
	\item LBO can potentially lead to large gains in performance across acquisition functions and datasets, particularly for TS.
	%\item LBO can help EI, PI, and UCB and in the worst case it does not hurt the methods.
	
	\item Across datasets, we observe that the gains for EI are relatively small, they are occasionally large for PI and UCB and tend to be consistently large for TS. This can be explained as follows: using EI results in under-exploration of the search space, a fact that has been consistently observed and even theoretically proven by~\citet{qin2017improving}. As a result of this, BO does not tend to explore ``bad'' regions when using EI which results in smaller gains from LBO (on the other hand, it may under-explore).
	
	\item TS suffers from exactly the opposite problem: it results in high variance leading to over-exploration of the search space and poor performance. This can be observed in Figures~\ref{fig:michalewicz-5D-TS},~\ref{fig:rosenbrock-3D-TS} and~\ref{fig:robot-3D-TS} where the performance of TS is near random. This has also been observed and noted by~\citet{shahriari2016taking}. For the discrete multi-armed bandit case,~\citet{chapelle2011empirical} multiply the obtained variance estimate by a small number to discourage over-exploration and show that it leads to better results. LBO offers a more principled way of obtaining this same effect and consequently results in making TS more competitive with the other acquisition functions. 
	
	\item The only functions where LBO slightly hurts are Rosenbrock-4D and Goldstein with UCB and PI.
	
	%\item Figure~\ref{joint} presents a comparison between the performance of across acquisition functions (for both BO and LBO) on some of the benchmark functions.
	
	\item For Michalwicz-5D (Figure~\ref{fig:joint-mich-5D}), we see that there is no gain for EI, PI, or UCB. However, the gain is huge for TS functions. In fact, even though TS is the worst performing acquisition function on this dataset, its LBO variant AR-TS gives the best performance across all methods. This demonstrates the possible gain that can be obtained from using  mixed acquisition functions. 
	\item We observe a similar trend in Figures~\ref{fig:joint-rosen-2D} %and~\ref{fig:joint-rosen-3D} 
	where LBO improves TS from near-random performance to being competitive with the best performing methods (while it does not adversely affect the methods performing well). 
	
	\item For the cases where BO performs slightly better than LBO, we notice that the True estimate of $L$ provides compararble performance to BO, so the problem can be narrowed down to finding a good estimate of $L$.
	
	\item \ylow{Figure~\ref{fig:neg_camel-2-UCB_large_ucb} shows examples where LBO saves BO with UCB when the parameter $\beta$ is chosen too large ($\beta = 10^{16}$). In this case BO performs near random, but using LBO leads to better performance than random search.}
\end{itemize}

In any case, our experiments indicate that LBO methods rarely hurt the performance of the original acquisition function. Since they have minimal computational or memory requirements and are simple to implement, these experiments support using  the Lipschitz bounds.
\section{Related work}
\label{sec:related-work}
The Lipschitz condition has been used with BO under different contexts in two previous works~\citep{gonzalez2016batch, sui2015safe}. The aim of~\citet{sui2015safe} is to design a ``safe'' BO algorithm. They assume knowledge of the true Lipschitz constant and exploit Lipschitz continuity to construct a safety threshold in order \gre{to construct a ``safe''  region of the parameter space}.
\gre{This  is different than our goal of improving the performance of existing BO methods, and also different in that we estimate the Lipschitz constant as we run the algorithm.}
 On the other hand,~\citet{gonzalez2016batch} used Lipschitz continuity to model interactions between a batch of points chosen simultaneously in every iteration of BO (referred to as ``Batch'' Bayesian optimization). \gre{This contrasts with our work where we are aiming to improve the performance of existing sequential algorithms (it is possible that our ideas could be used in their framework).}
%Also, unlike~\citep{kawaguchi2015bayesian}, our techniques work well in noisy settings such as hyper-parameter tuning for machine learning models. 
%In~\citep{malherbe2017global}, the authors first use assume the Lipschitz constant to be known, and propose a simple algorithm which evaluates the function at a randomly selected point only if the Lipschitz upper bound at that point is greater than the function evaluated at the already selected points. They then propose a method to estimate the Lipschitz constant on the fly. Wu et al in~\citep{wu2017bayesian} use gradient information in the kernel, and assume that the function values and its gradients are jointly Gaussian-distributed. As discussed earlier, this increases the memory requirement for storing the kernel matrix by a factor quadratic in the dimension, and increases the per-iteration computational complexity by a cubic factor of the dimension. 

%!TEX root = ../main.tex
\vspace*{-2.5ex}
\section{Discussion}
\label{sec:discussion}
In this paper, we have proposed simple ways to combine Lipschitz inequalities with some of the most common BO methods. Our experiments show that this often gives a performance gain, and in the worst case it 
%simply reverts 
performs similar
to a standard BO method. 
%Although we have focused on the case of deterministic function evaluations, our conclusions could be extended to allow noisy function evaluations provided that the noise is bounded. And 
Although we have focused on four of the simplest acquisition functions, it seems that these inequalities could be used within other acquisition functions. For example, 
information-theoretic acquisition functions such as entropy search and their recent extensions rely on sampling a function from the GP and hence the techniques we used for Thompson sampling can be used. We leave a systematic study of these information-theoretic acquisition functions to future study. Further, we expect that the Lipschitz inequalities could also be used in other settings like BO with constraints~\citep{gelbart2014bayesian,hernandez2016general,gardner2014bayesian}, BO methods based on other model classes like neural networks~\citep{snoek2015scalable} or random forests~\citep{hutter2011sequential}, and methods that evaluate more than one $x_t$ at a time~\citep{ginsbourger2010kriging, wang2016parallel}.
Finally, there has been recent interest in first-order Bayesian optimization methods %which use gradients of the objective
~\citep{ahmed2016we,wu2017bayesian}.  If the gradient is Lipschitz continuous then it is possible to use the descent lemma~\citep{bertsekas1999} to obtain Lipschitz bounds that depend on both function values and gradients.

% ---------------------------------------- DUMP ---------------------------------------------------------- %
%In cases where we cannot reliably estimate the Lipschitz constant, it could make sense to use a prior on the constant and incorporate this within the Bayesian framework~\citep{calliess2015bayesian}.

\clearpage
\bibliographystyle{spbasic}
\bibliography{ref}
 
\clearpage
\onecolumn
\appendix
%!TEX root = ../main.tex
\section{Proof for Lipschitz constant estimation}
\label{app:proofs}

In this section we analyze the minimum number of iterations required before we can guarantee (in expectation) that we'll have a point $x$ satisfying
\begin{equation}
f(x) - f(x^*) \leq \epsilon,
\label{eq:opt}
\end{equation}
for a given accuracy tolerance $\epsilon$. Here we assume that $x^*$ is a globally-optimal solution (assumed to exist), the domain of $x$ is a hyper-cube $\mathcal{X}$ in $\R^d$, and $f$ is Lipschitz-continuous. We use $L$ as the minimum value we can use for the Lipschitz constant of $f$. We first consider the case of random selection, followed by random selection with pruning based on any upper bound on $L$, and finally random selection with pruning based on a growing estimate of the Lipschitz constant.

\subsection{Random Selection}

Our first result gives a lower bound on the volume of the solution space where the $x$ satisfy~\eqref{eq:opt}.
\begin{lemma}
For a Lipschitz-continuous function $f$ defined on a hyper-cube $\mathcal{X}$, the volume of $\mathcal{X}$ satisfying~\eqref{eq:opt} is $\Omega((\epsilon/L)^d)$.
\end{lemma}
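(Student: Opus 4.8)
The plan is to show that an entire neighborhood of the optimizer $x^*$ lies in the target set, and then to lower-bound the volume of that neighborhood. First a remark on the statement: since $x^*$ is a global maximizer, the condition~\eqref{eq:opt} is precisely the assertion that $x$ is $\epsilon$-optimal, $f(x^*) - f(x) \le \epsilon$, which (again because $x^*$ is optimal) is the same as $|f(x) - f(x^*)| \le \epsilon$. So let $S = \{x \in \mathcal{X} : |f(x) - f(x^*)| \le \epsilon\}$ be the set whose volume we must bound from below.

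The key step is purely local and uses only the Lipschitz property: if $\|x - x^*\|_2 \le \epsilon / L$ then $|f(x) - f(x^*)| \le L\,\|x - x^*\|_2 \le \epsilon$, so every point of $\mathcal{X}$ within Euclidean distance $\epsilon/L$ of $x^*$ belongs to $S$. Hence $\mathrm{vol}(S) \ge \mathrm{vol}(B(x^*, \epsilon/L) \cap \mathcal{X})$, and it remains to show this intersection has volume $\Omega((\epsilon/L)^d)$.

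To deal with the possibility that $x^*$ lies on or near the boundary of the hyper-cube, I would not use the ball directly but instead inscribe an axis-aligned sub-cube. Writing $\mathcal{X} = \prod_{j=1}^d [a_j, b_j]$, set $s = \epsilon/(L\sqrt{d})$. For each coordinate $j$, at least one of the intervals $[x^*_j, x^*_j + s]$ or $[x^*_j - s, x^*_j]$ lies inside $[a_j, b_j]$ whenever $s \le (b_j - a_j)/2$ (pick the direction in which $x^*_j$ has the most room). The product of these intervals is a cube $Q \subseteq \mathcal{X}$ of side $s$ having $x^*$ as one of its vertices; every point of $Q$ is within distance $s\sqrt{d} = \epsilon/L$ of $x^*$, so $Q \subseteq S$ and $\mathrm{vol}(S) \ge s^d = d^{-d/2}(\epsilon/L)^d$, which is $\Omega((\epsilon/L)^d)$ since $d$ is treated as a constant.

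The only point requiring care — and the main (minor) obstacle — is the regime where $s$ is not small relative to the side lengths: if $\epsilon/(L\sqrt{d}) > (b_j - a_j)/2$ for some $j$, the inscribed cube has side $\min_j \min(s,\,(b_j-a_j)/2)$ rather than $s$. This does not affect the asymptotic claim, which is about $\epsilon/L \to 0$ (equivalently, the regime in which the subsequent iteration-count bound is non-trivial); there $s$ is eventually below every $(b_j - a_j)/2$ and $\mathrm{vol}(S) \ge d^{-d/2}(\epsilon/L)^d$ holds as stated. I would add a sentence making this regime explicit and noting that the hidden constant depends only on $d$ (and, outside that regime, on the shape of $\mathcal{X}$).
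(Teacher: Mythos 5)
Your proof is correct and follows essentially the same route as the paper's: both observe via the Lipschitz inequality that every point of $\mathcal{X}$ within distance $\epsilon/L$ of $x^*$ is $\epsilon$-optimal, and then lower-bound the volume of that neighborhood's intersection with the hyper-cube by a constant (depending only on $d$) times $(\epsilon/L)^d$. The only differences are cosmetic: the paper keeps one orthant of the ball when $x^*$ sits at a vertex (factor $2^{-d}$ of the ball's volume) whereas you inscribe an axis-aligned sub-cube of side $\epsilon/(L\sqrt{d})$ (factor $d^{-d/2}$), and your explicit caveat about the regime where $\epsilon/L$ is not small relative to the side lengths of $\mathcal{X}$ is a point the paper silently glosses over.
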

\begin{proof}
By the Lipschitz inequality we have for any solution $x^*$ that
\[
|f(x) - f(x^*)| \leq L\norm{x-x^*},
\]
for any $x \in \mathcal{X}$. Choose some particular solution $x^*$, and let $\mathcal{B}$ be the set of $x$ satisfying $L\norm{x-x^*} \leq \epsilon$. %, or equivalently .
Notice that all $x \in \mathcal{B} \cap \mathcal{X}$ satisfy $\eqref{eq:opt}$, so it is sufficient to show that $|\mathcal{B} \cap \mathcal{X}| = \Omega((\epsilon/L)^d)$.

Since $\mathcal{B}$ is the set of points satisfying $\norm{x-x^*} \leq \epsilon/L$, it is a hyper-shere of radius $\epsilon/L$ which means its volume is $\frac{\pi^{d/2}(\epsilon/L)^d}{(d/2)!}$. The case where $\mathcal{B}$ has the smallest intersection with $\mathcal{X}$ is when $x^*$ is at a vertex in the hyper-cube; in this case we have that exactly one orthant of $\mathcal{B}$ intersecting with $\mathcal{X}$. Since there are $2^d$ orthants (of equal size), in the worst case we have $|\mathcal{B}\cap\mathcal{X}| \geq |\mathcal{B}|/2^d = \frac{\pi^{\d/2}(\epsilon/L)^d}{2^d(d/2)!} = \Omega((\epsilon/L)^d)$ (for fixed dimension $d$).
\end{proof}

Next we give a lower bound on the probability that a random iterate $x$ is a point satisfying~\eqref{eq:opt}
\begin{lemma}
For a Lipschitz-continuous function $f$ defined on a hyper-cube $\mathcal{X}$, a point $x$ chosen uniformly at random from $\mathcal{X}$ satisfies~\eqref{eq:opt} with probability $\Omega((\epsilon/L)^d)$.
\end{lemma}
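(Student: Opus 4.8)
The plan is to obtain this statement as an immediate corollary of the preceding volume lemma together with the definition of the uniform distribution on $\mathcal{X}$. First I would let $\mathcal{A} \subseteq \mathcal{X}$ denote the set of points $x$ satisfying~\eqref{eq:opt}; this set is measurable since it is the intersection of $\mathcal{X}$ with the preimage of the closed half-line $(-\infty, f(x^*)+\epsilon]$ under the continuous (indeed Lipschitz) map $f$. By the definition of drawing $x$ uniformly at random from the hyper-cube $\mathcal{X}$, we have $\PP{x \text{ satisfies } \eqref{eq:opt}} = |\mathcal{A}| / |\mathcal{X}|$, where $|\cdot|$ denotes Lebesgue volume.

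Next I would invoke the previous lemma, which gives $|\mathcal{A}| = \Omega((\epsilon/L)^d)$, i.e.\ there is a constant $c_d > 0$ (depending only on $d$) with $|\mathcal{A}| \geq c_d (\epsilon/L)^d$ for all sufficiently small $\epsilon/L$. Since $\mathcal{X}$ is a fixed hyper-cube in $\R^d$, its volume $|\mathcal{X}|$ is a fixed positive constant. Dividing, $\PP{x \text{ satisfies } \eqref{eq:opt}} \geq (c_d/|\mathcal{X}|)(\epsilon/L)^d = \Omega((\epsilon/L)^d)$, which is the claim; the only change from the volume lemma is that the hidden constant now also absorbs $1/|\mathcal{X}|$.

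There is essentially no real obstacle here: the content was already done in the volume lemma, and this step is just normalizing by the (constant) volume of the domain. The only points that need a sentence of care are (i) noting that $\mathcal{A}$ is measurable so that the probability is well-defined, and (ii) being explicit that the $\Omega(\cdot)$ here, as in the previous lemma, treats the dimension $d$ (and now also the side length of $\mathcal{X}$) as fixed while $\epsilon/L \to 0$.
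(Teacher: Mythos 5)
Your proposal is correct and follows exactly the same route as the paper: invoke the preceding volume lemma and divide by the (constant) volume of the hyper-cube $\mathcal{X}$, absorbing $1/|\mathcal{X}|$ into the $\Omega(\cdot)$ constant. The extra remarks on measurability and on what is held fixed in the asymptotic notation are fine but not needed beyond what the paper already states.
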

\begin{proof}
The previous lemma shows that there is a volume of size $\Omega((\epsilon/L)^d)$ in $\mathcal{X}$ containing solutions. Thus, the probability that random point in $\mathcal{X}$ is a solution is $\Omega((\epsilon/L)^d/|\mathcal{X}|) = \Omega((\epsilon/L)^d)$ (for a fixed hyper-cube size).
\end{proof}

Finally, we can give an upper bound on the expected number of iterations before we have an $x_t$ satisfying~\eqref{eq:opt}.
\begin{lemma}
For a Lipschitz-continuous function $f$ defined on a hyper-cube $\mathcal{X}$, if we independently sample points $\{x_1, x_2, \dots\}$ uniformly at random from $\mathcal{X}$, then in expectation we find a point $x$ satisfying~\eqref{eq:opt} after $O((L/\epsilon)^d)$ samples.
\end{lemma}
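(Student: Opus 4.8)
The plan is to reduce the statement to a standard fact about the geometric distribution. By the previous lemma, each sample $x_t$, drawn uniformly and independently from $\mathcal{X}$, satisfies~\eqref{eq:opt} with probability at least $p$, where $p = \Omega((\epsilon/L)^d)$; that is, there is a constant $c>0$ (depending only on the fixed dimension $d$) with $p \geq c(\epsilon/L)^d$. I would first fix this $p$ and observe that, because the samples are i.i.d., the number of draws until the first ``success'' (a point satisfying~\eqref{eq:opt}) is dominated by a geometric random variable $N$ with success probability $p$.

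Next I would invoke $\mathbb{E}[N] = 1/p$ for a geometric random variable, or re-derive it in one line via $\mathbb{E}[N] = \sum_{k\geq 0}\PP{N>k} = \sum_{k\geq 0}(1-p)^k = 1/p$. Combining with the lower bound on $p$ gives $\mathbb{E}[N] \leq \frac{1}{c}(L/\epsilon)^d = O((L/\epsilon)^d)$, which is exactly the claimed bound. The only small care needed is to note that the per-sample success probabilities are each at least $p$ (they are all equal, in fact, since the sampling distribution does not change with $t$), so stochastic domination by $\mathrm{Geometric}(p)$ is immediate and the expectation bound goes through.

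There is essentially no hard step here: the lemma is a direct corollary of the preceding probability bound together with the elementary identity for the mean of a geometric distribution. If anything, the only thing worth stating explicitly is that independence across iterations is what licenses the geometric model, and that the $\Omega$ in the previous lemma translates into the matching $O$ here because we are bounding $1/p$ from above by $1/p_{\min}$.
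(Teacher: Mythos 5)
Your proposal is correct and follows essentially the same route as the paper: both treat each uniform sample as an independent Bernoulli trial with success probability $\Omega((\epsilon/L)^d)$ from the preceding lemma, model the waiting time as a geometric random variable, and conclude via $\mathbb{E}[N]=1/p$ that the expected number of samples is $O((L/\epsilon)^d)$. Your added remarks on stochastic domination and the explicit derivation of the geometric mean are fine but not needed beyond what the paper already does.
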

\begin{proof}
From the previous lemma, each independent sample finds a solution with probability $\Omega((\epsilon/L)^d$. Viewing each sample as a Bernoulli trial, the expected number of iterations before we find a solution is a geometric random variable with success probability $\Omega((\epsilon/L)^d$. Since the expectation of a geometric random variable is the inverse of its success probability, in expectation we find a solution after $O((L/\epsilon)^d)$ samples.
\end{proof}

Instead of ``number of samples $t$ to reach accuracy $\epsilon$'', we could equivalently state the result in terms ``expected error at iteration $t$'' (simple regret) by inverting the relationship between $t$ and $\epsilon$. This would give an expected error on iteration $t$ of $O(L/t^{1/d})$.

\subsection{Random Selection, Pruning based on the True Lipschitz Constant}

In the previous section, we considered choosing points $x_t$ uniformly from $\mathcal{X}$. Consider the case where we are given $L$ (or an upper bound on it), and instead sample uniformly from $\mathcal{X}$ intersected with the points that are not ruled out by the Lipschitz inequalities. Note that this restriction cannot rule out points in $\mathcal{B}$ unless we already  have an $\epsilon$-optimal solution, and thus the arguments from the previous section still apply.

\subsection{Random Selection, Pruning based on a Growing Lipschitz Constant Estimate}

Unfortunately, if we use an estimate $\hat{L}_t$ of $L$ instead of an $L$ satisfying the Lipschitz inequality, we could reject an approximate solution. However, if $\hat{L}_t$ grows with $t$ then eventually it is sufficiently large that we will not reject an approximate solution (unless we already have an $\epsilon$-optimal solution). Thus, a crude bound on the expected number of iterations before we find a solution with accuracy $\epsilon$ is given by $O((L/\epsilon)^d + T)$, where $T$ is the first iteration $t$ beyond which we always have $\hat{L} \geq L$. Thus, if we choose the sequence $\hat{L}_t$ such that $T = O((L/\epsilon)^d)$, then LO with an estimated $\hat{L}_t$ is harmless as it requires the same expected number of iterations as random guessing. A simple example of a sequence of $\hat{L}$ values satisfying this property would be to choose $\hat{L}_t = tL(\epsilon/L)^d$, which grows extremely-slowly (for small $\epsilon$ and non-trivial $d$ or $L$). Larger sequences would imply a smaller $T$ and hence also would be harmless.
\section{Regret Bound}
\label{app:regret}
\setcounter{theorem}{0}
\begin{theorem}
Let $\cD$ be a finite decision space and $\sigma$ be the standard deviation of the noise in the observations. Let $\pi_t$ be a positive scalar such that $\sum_{t} \pi_{t}^{-1} = 1$ and $\delta \in (0,1)$. If we use the AR-UCB algorithm with $\bt = 2 \log(\vert \cD \vert \pi_{t}/\delta)$ assuming that the above conditions $1$-$3$ hold, then the expected cumulative regret $R(T)$ can be bounded as follows:
\begin{align*}
R(T) & \leq \left( 8 / \log(1 + \sigma^{-2}) \right) \beta_{T} \gamma_{T} \sqrt{T}.
\end{align*}
Here, $\gamma_{T}$ refers to the information gain for the selected points and depends on the kernel being used. For the squared exponential kernel, we obtain the following specific bound:
\begin{align*}
R(T) & \leq \left( 8 / \log(1 + \sigma^{-2}) \right) \beta_{T} (\log(T))^{d+1} \sqrt{T}
\end{align*}
\end{theorem}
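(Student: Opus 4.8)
The plan is to follow the finite-domain regret analysis of \GPUCB~\citep{srinivas2009gaussian} essentially verbatim, and to verify at the one place where AR-UCB differs from \GPUCB that the accept-reject filter never discards a point that matters for the argument. So the proof will have three parts: (i) a high-probability confidence band; (ii) a per-round regret bound that survives the accept-reject step; (iii) the standard summation via the information gain.

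For (i), since $\mathcal{D}$ is finite and $f \sim GP(0,k)$, each marginal $f(x)$ is Gaussian, so for a fixed $x$ and $t$ a Gaussian tail bound gives $|f(x)-\mu_{t-1}(x)| \le \beta_t^{1/2}\sigma_{t-1}(x)$ with probability at least $1-\delta/(|\mathcal{D}|\pi_t)$; a union bound over $x \in \mathcal{D}$ and over $t \ge 1$ (the latter summable because $\sum_t \pi_t^{-1} = 1$) shows that with probability at least $1-\delta$ this holds simultaneously for all $x$ and all $t$, for the stated choice $\beta_t = 2\log(|\mathcal{D}|\pi_t/\delta)$. Call this event $\mathcal{E}$ and condition on it throughout.

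Part (ii) is the crux. On $\mathcal{E}$ we have $\mathrm{UCB}(x) = \mu_{t-1}(x)+\beta_t^{1/2}\sigma_{t-1}(x) \ge f(x)$, and by Assumptions~2--3 the Lipschitz band is valid, i.e. $f^{l}_{t-1}(x) \le f(x) \le f^{u}_{t-1}(x)$ for all $x$; hence $\mathrm{UCB}(x) \ge f^{l}_{t-1}(x)$ always, so AR-UCB can only reject a point through the \emph{upper} bound, and only when $\mathrm{UCB}(x) > f^{u}_{t-1}(x) \ge f(x)$ --- that is, it only prunes points about which the optimistic estimate is strictly ``too optimistic''. In particular $x^*$ itself lies in the valid band, so AR-UCB never prunes away the fact that some retained point has UCB value at least $f(x^*)$; since $x_t$ is chosen as the maximizer of $\mathrm{UCB}$ over the retained set, $\mathrm{UCB}(x_t) \ge f(x^*)$. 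Combining with $f(x_t) \ge \mu_{t-1}(x_t)-\beta_t^{1/2}\sigma_{t-1}(x_t) = \mathrm{UCB}(x_t)-2\beta_t^{1/2}\sigma_{t-1}(x_t)$ then gives the per-round bound $r_t := f(x^*)-f(x_t) \le 2\beta_t^{1/2}\sigma_{t-1}(x_t)$, exactly as for plain \GPUCB. I expect this to be the only genuinely delicate step: one has to argue carefully that discarding ``over-optimistic'' points can never leave the acquisition maximizer with $\mathrm{UCB}(x_t) < f(x^*)$, and this is exactly where the over-estimation of $L$ (Assumption~3) and the validity of the noiseless Lipschitz band (Assumption~2) are used.

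Part (iii) is routine. Squaring and summing, $\sum_{t=1}^{T} r_t^2 \le 4\beta_T \sum_{t=1}^{T}\sigma_{t-1}^2(x_t)$ since $\beta_t \le \beta_T$; because after the data standardization the posterior variances satisfy $\sigma_{t-1}^2(x_t) \le 1$ and $s^2 \le \tfrac{1}{\log(1+\sigma^{-2})}\log(1+\sigma^{-2}s^2)$ on $[0,1]$, one obtains $\sum_{t=1}^{T}\sigma_{t-1}^2(x_t) \le \tfrac{2}{\log(1+\sigma^{-2})}\,\gamma_T$ where $\gamma_T$ is the maximum information gain after $T$ rounds (the only kernel-dependent quantity). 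Cauchy--Schwarz then gives $R(T) \le \sqrt{T \sum_t r_t^2} \le \sqrt{\tfrac{8}{\log(1+\sigma^{-2})}\,\beta_T\,\gamma_T\,T}$, which I would present in the (loosened) form stated in the theorem. Substituting the kernel-specific bound $\gamma_T = O((\log T)^{d+1})$ for the squared-exponential kernel~\citep[Theorem~5]{srinivas2009gaussian} yields the second displayed bound, and the Mat\'{e}rn case follows identically from the corresponding bound on $\gamma_T$.
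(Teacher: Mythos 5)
Your proposal is correct and follows essentially the same route as the paper: the finite-decision-set confidence bound of Srinivas et al.\ (their Lemma~5.1), the observation that the accept--reject step preserves the per-round inequality $r_t \le 2\beta_t^{1/2}\sigma_{t-1}(x_t)$, and their Lemma~5.4 to convert $\sum_t \beta_t^{1/2}\sigma_{t-1}(x_t)$ into the $\gamma_T\sqrt{T}$ bound. The only real differences are cosmetic: the paper additionally carries the Lipschitz gap $f^u_{t-1}(x_t)-f^l_{t-1}(x_t)\le 2L\min_{i\in[t-1]}\|x_t-x_i\|_2$ inside a $\min\{\cdot,\cdot\}$ before discarding it in the final bound, and it handles the delicate step you flag by simply asserting $\mu_{t-1}(x_t)+\beta_t^{1/2}\sigma_{t-1}(x_t)\ge \mu_{t-1}(x^*)+\beta_t^{1/2}\sigma_{t-1}(x^*)$ from ``the UCB rule'' (implicitly assuming $x^*$ survives the accept--reject filter), so your treatment of that point is no less careful than the paper's.
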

\begin{proof}
\begin{align}
\intertext{By definition of Lipschitz bounds and assuming we know the true Lipschitz constant $L$, at iteration $t$, for all $x$,}
\fl(x) & \leq f(x) \leq \fu(x). & \label{eq:lip-bounds-2} 
\end{align}
We now use the following lemma from~\cite{srinivas2009gaussian}:
\begin{lemma}[Lemma 5.1 in~\cite{srinivas2009gaussian}]
Denoting $\cD$ as a finite decision space, let $\pi_t > 0$ and $\sum_{t} \pi_{t}^{-1} = 1$. Choose $\bt = 2 \log(\vert \cD \vert \pi_{t}/\delta)$ where $\delta \in (0,1)$. Then, for all $x \in \cD$ and $t \geq 1$, with probability $1 - \delta$, 
\begin{align}
\vert f(x) - \mut(x) \vert \leq \bt \sigmat(x). \label{eq:sri-lemma}
\end{align}
\label{lemma:sri-lemma}
\end{lemma} 
\begin{align}
\intertext{From Equations~\ref{eq:lip-bounds-2} and~\ref{eq:sri-lemma},}
f(\xopt) & \leq \min\{\fu(\xopt), \mut(\xopt) + \bt \sigmat(\xopt)  \}. & \label{eq:inter-1} \\
\intertext{For the point $\xt$ selected at round $t$, the following relation holds because of the Accept-Reject condition:}
\fl(\xt) & \leq \mut(\xt) + \bt \sigmat(\xt) \leq \fu(\xt). &\label{eq:acc-rej} \\
\intertext{The following holds because of the definition of the UCB rule:}
\mut(\xt) + \bt \sigmat(\xt) & \geq \mut(\xopt) + \bt \sigmat(\xopt). &\label{eq:ucb-def} 
\intertext{From Equations~\ref{eq:sri-lemma} and~\ref{eq:acc-rej}}
\mut(\xt) + \bt \sigmat(\xt) & \leq \min\{ f(\xt) + 2 \bt \sigmat(\xt), \fu(\xt) \}. & \label{eq:inter-2} 
\end{align}
\begin{align*}
\intertext{Let $r_t$ be the instantaneous regret in round $t$. Then,}
r_t & = f(\xopt) - f(\xt) \\
& \leq \min\{\fu(\xopt), \mut(\xopt) + \bt \sigmat(\xopt)  \} - f(\xt) & \tag{From Equation~\ref{eq:inter-1}} \\
& \leq \min\{\fu(\xopt), \mut(\xt) + \bt \sigmat(\xt)  \} - f(\xt) & \tag{From Equation~\ref{eq:ucb-def}} \\
& = \min\{\fu(\xopt) - f(\xt), \mut(\xt) + \bt \sigmat(\xt) - f(\xt) \} & \tag{$\min\{a,b\} - c = \min\{a-c,b-c\}$} \\
& \leq \mut(\xt) + \bt \sigmat(\xt) - f(\xt) & \tag{$\min\{a,b\} \leq b$} \\
& \leq \min\{ f(\xt) + 2 \bt \sigmat(\xt), \fu(\xt) \} - f(\xt)& \tag{From Equation~\ref{eq:inter-2}} \\
& = \min\{ 2 \bt \sigmat(\xt), \fu(\xt) - f(\xt) \} & \tag{$\min\{a,b\} - c = \min\{a-c,b-c\}$} \\
\implies r_t & \leq \min\{ 2 \bt \sigmat(\xt), \fu(\xt) - \fl(\xt) \}. & \tag{From Equation~\ref{eq:lip-bounds-2}}
\end{align*}
\begin{align*}
\intertext{Let us now consider the term $\fu(\xt) - \fl(\xt)$.}
\fu(\xt) - \fl(\xt) & = \min_{i \in [t-1]} \left\{ f(x_{i}) + L \vert \vert \xt - x_{i} \vert \vert_{2} \right\} - \max_{i \in [t-1]} \left\{ f(x_{i}) - L \vert \vert \xt - x_{i} \vert \vert_{2} \right\} & \tag{By Equation ~\ref{eq:lip-bounds}} \\
& = \min_{i \in [t-1]} \left\{ f(x_{i}) + L \vert \vert \xt - x_{i} \vert \vert_{2} \right\} + \min_{i \in [t-1]} \left\{ - f(x_{i}) + L \vert \vert \xt - x_{i} \vert \vert_{2} \right\} & \tag{$-\max\{a,b\} = \min\{-a,-b\}$} \\
& \leq \min_{i \in [t-1]} \left\{ f(x_{i}) + L \vert \vert \xt - x_{i} \vert \vert_{2} - f(x_{i}) + L \vert \vert \xt - x_{i} \vert \vert_{2} \right\} &\tag{$\min\{a_i + b_i\} \geq \min\{a_i\} + \min\{b_i\}$} \\
\implies \fu(\xt) - \fl(\xt) & \leq 2L \min_{i \in [t-1]} \left\{ \vert \vert \xt - x_{i} \vert \vert_{2} \right\} .
\end{align*}
\begin{align*}
\intertext{From the above equations,}
r_t & \leq \min \left\{ 2 \bt \sigmat(\xt),  2L \min_{i \in [t-1]} \left\{ \vert \vert \xt - x_{i} \vert \vert_{2} \right\} \right\}. \\
\intertext{Let $R(T)$ be the cumulative regret after $T$ rounds.}
R(T) & = \sum_{t = 1}^{T} r_t \leq \sum_{t= 1}^{T} \left[ \min \left\{ 2 \bt \sigmat(\xt),  2L \min_{i \in [t-1]} \left\{ \vert \vert \xt - x_{i} \vert \vert_{2} \right\} \right\} \right] \\
R(T) & \leq \min \left\{ 2 \sum_{t=1}^{T} \bt \sigmat(\xt), 2L \sum_{t=1}^{T} \min_{i \in [t-1]} \left\{ \vert \vert \xt - x_{i} \vert \vert_{2} \right\} \right\} &\tag{$\min\{\sum_i a_i \} \geq \sum_i \min\{a_i\}$} \\
\end{align*}
We now bound the term $2 \sum_{t=1}^{T} \bt \sigmat(\xt)$ using the lemma in~\cite{srinivas2009gaussian} which we restate next:
\begin{lemma}[Lemma $5.4$ in~\cite{srinivas2009gaussian}]
Choosing $\bt = 2 \log(\vert \cD \vert \pi_{t}/\delta)$,
\begin{align*}
2 \sum_{t=1}^{T} \bt \sigmat(\xt) & \leq C_{1} \gamma_{T} \sqrt{T}.
\end{align*}
where $C_{1} = \left( 8 / \log(1 + \sigma^{-2}) \right) \beta_{T}$. Here $\gamma_{T}$ refers to the information gain for the selected points. 
\end{lemma} 
Using the above lemma, we obtain the following bound:
\begin{align*}
R(T) & \leq \min \left\{C_{1} \gamma_{T} \sqrt{T}, 2L \sum_{t=1}^{T} \min_{i \in [t-1]} \left\{ \vert \vert \xt - x_{i} \vert \vert_{2} \right\} \right\} \\
\implies R(T) & \leq \left( 8 / \log(1 + \sigma^{-2}) \right) \beta_{T} \gamma_{T} \sqrt{T}.
\end{align*}
\end{proof}
\section{Additional Experimental Results}
\label{app:exps}
Below we show the results of all the experiments for all the datasets as follows: %for each of the four acquisition functions following which we present combined results (across all the acquisition functions). Finally,  

\begin{itemize}
 \item Figure~\ref{fig:all-TS} shows the performance of Random search, BO, and LBO (using both  estimated and \emph{True} $L$) for the TS acquisition function.
 \item Figure~\ref{fig:all-UCB} shows the performance of Random search, BO, and LBO (using both  estimated and \emph{True} $L$) for the UCB acquisition function.
 \item Figure~\ref{fig:all-EI} shows the performance of Random search, BO, and LBO (using both  estimated and \emph{True} $L$) for the EI acquisition function.
 \item Figure~\ref{fig:all-PI} shows the performance of Random search, BO, and LBO (using both  estimated and \emph{True} $L$) for the PI acquisition function.
 \item Figure~\ref{fig:all-joint} shows the performance of BO  and LBO using the estimated  $L$ for the all acquisition function.  
 \item Figure~\ref{fig:all-large-UCB} shows the performance of Random search, BO, and LBO (using both  estimated and \emph{True} $L$) for the UCB acquisition function with very large $\beta=10^{16}$.
\end{itemize}

% -------------------------------- TS benchmarks  ------------------------------------------------ %
\begin{figure*}[!ht]
	\centering
	\subfigure[Branin 2D]
	{
		\includegraphics[width=0.31\textwidth,height=0.28\textwidth]{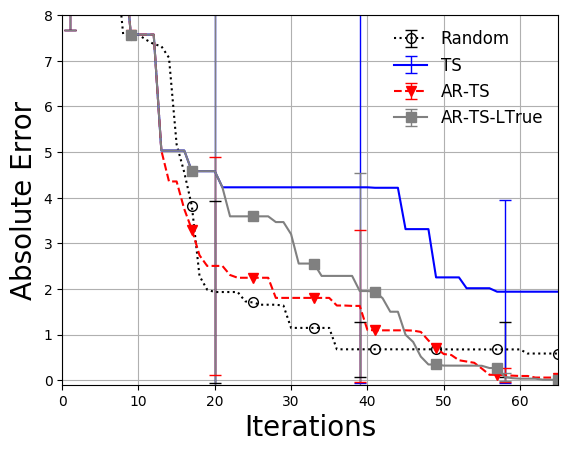}
		\label{fig:branin-2D-TSa}
	}
	\subfigure[Camel 2D]
	{
		\includegraphics[width=0.31\textwidth,height=0.28\textwidth]{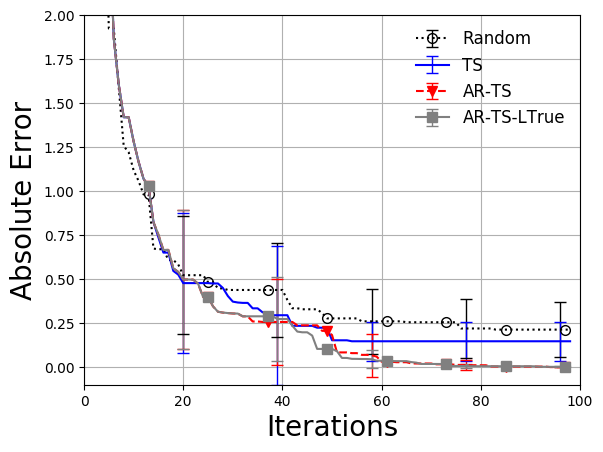}
		\label{fig:camel-2D-TSa}
	}
	\subfigure[Goldstein Price 2D]
	{
		\includegraphics[width=0.31\textwidth,height=0.28\textwidth]{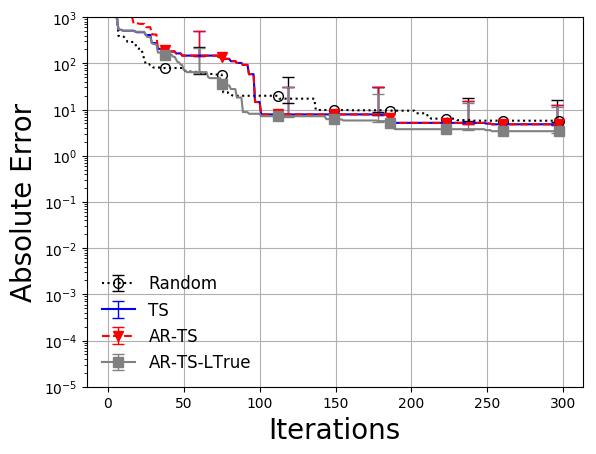}
		\label{fig:goldstein-2D-TSa}
	}
	\\
	\subfigure[Michalwicz 2D]
	{
		\includegraphics[width=0.31\textwidth,height=0.28\textwidth]{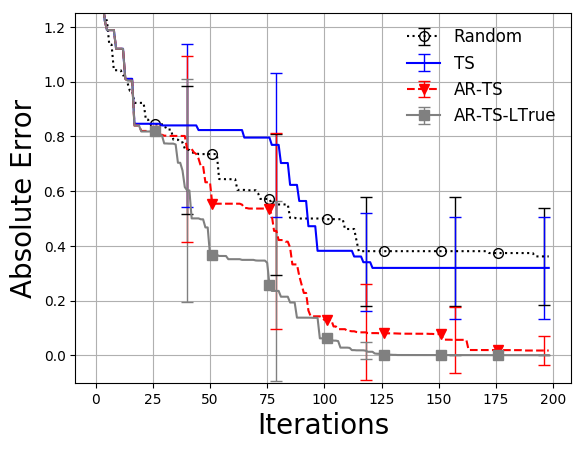}
		\label{fig:mich-2D-TSa}
	}
	\subfigure[Michalwicz 5D]
	{
		\includegraphics[width=0.31\textwidth,height=0.28\textwidth]{neg_michalewicz-5-TS.png}
		\label{fig:mich-5D-TSa}
	}
	\subfigure[Michalwicz 10D]
	{
		\includegraphics[width=0.31\textwidth,height=0.28\textwidth]{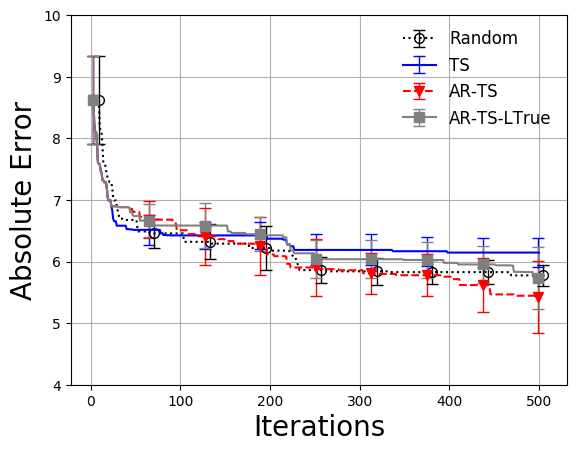}
		\label{fig:mich-10D-TSa}
	}
	\\
	\subfigure[Rosenbrock 2D]
	{
		\includegraphics[width=0.31\textwidth,height=0.28\textwidth]{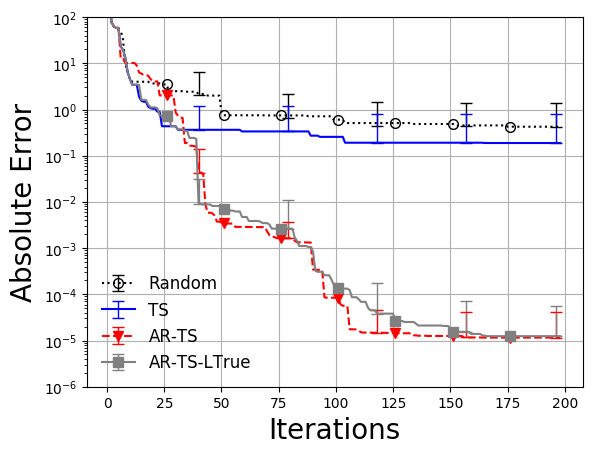}
		\label{fig:rosenbrock-2D-TSa}
	}
	\subfigure[Hartmann 3D]
	{
		\includegraphics[width=0.31\textwidth,height=0.28\textwidth]{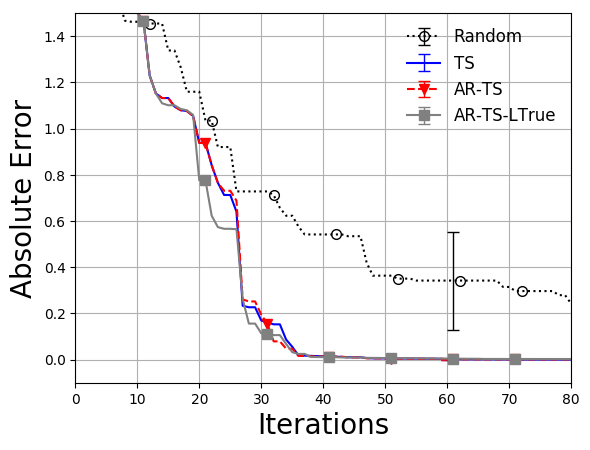}
		\label{fig:hartmann-3D-TSa}
	}
	\subfigure[Hartmann 6D]
	{
		\includegraphics[width=0.31\textwidth,height=0.28\textwidth]{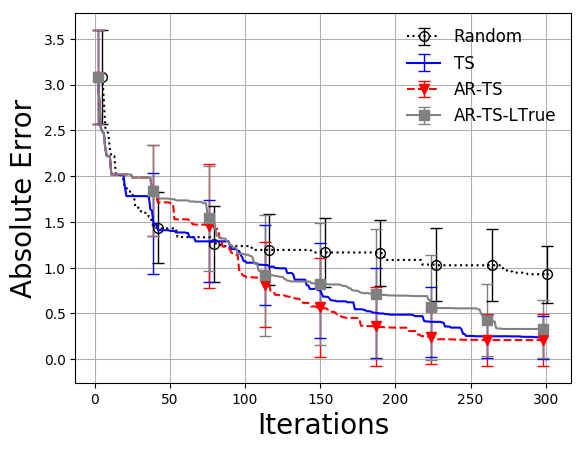}
		\label{fig:hartmann-6D-TSa}
	}
	\\
	\subfigure[Rosenbrock 3D]
	{
		\includegraphics[width=0.31\textwidth,height=0.28\textwidth]{neg_rosen-3-TS.png}
		\label{fig:rosenbrock-3D-TSa}
	}
	\subfigure[Rosenbrock 4D]
	{
		\includegraphics[width=0.31\textwidth,height=0.28\textwidth]{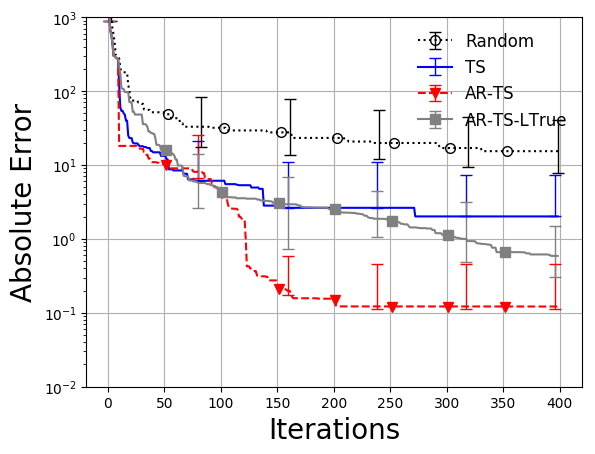}
		\label{fig:rosenbrock-4D-TSa}
	}
	\subfigure[Rosenbrock 5D]
	{
		\includegraphics[width=0.31\textwidth,height=0.28\textwidth]{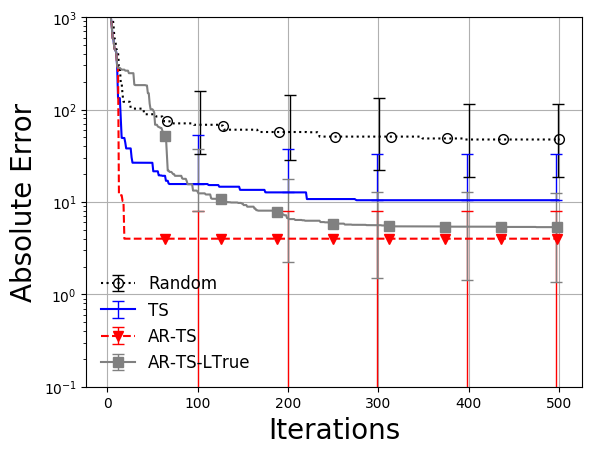}
		\label{fig:rosenbrock-5D-TSa}
	}	
		\\	
	\subfigure[Robot pushing 3D]
	{
		\includegraphics[width=0.31\textwidth,height=0.28\textwidth]{neg_robot_pushing_3D-3-TS.png}
		\label{fig:robot-3D-TSa}
	}	
	\subfigure[Robot pushing 4D]
	{
		\includegraphics[width=0.31\textwidth,height=0.28\textwidth]{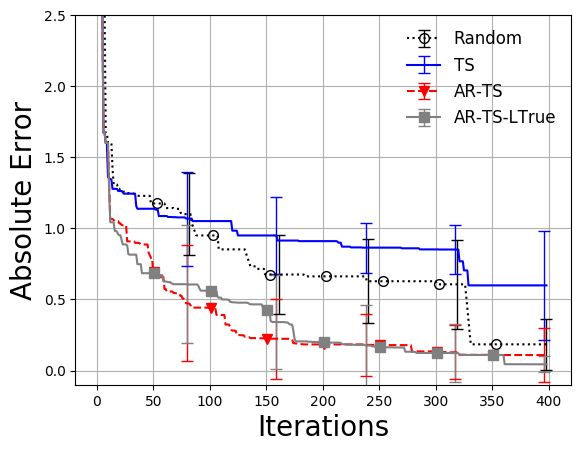}
		\label{fig:robot-4D-TSa}
	}
	\subfigure[Logistic Regression]
	{
		\includegraphics[width=0.31\textwidth,height=0.28\textwidth]{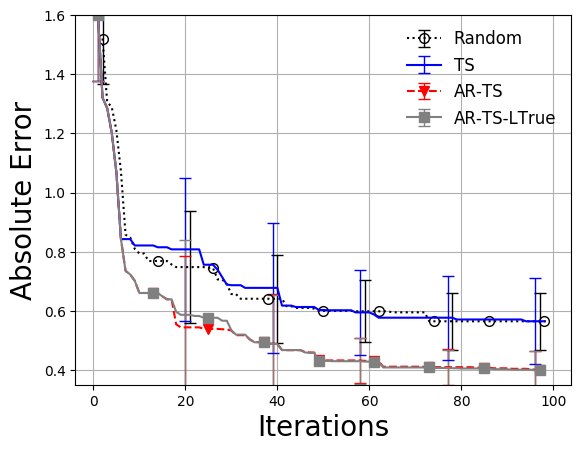}
		\label{fig:logistic-TSa}
	}	
	
	\caption{Comparing the performance of the conventional BO acquisition function, corresponding LBO mixed acquisition function, Lipschitz optimization and random exploration for the TS acquisition functions.}
	\label{fig:all-TS}
\end{figure*}

% -------------------------------- UCB benchmarks  ------------------------------------------------ %
\begin{figure*}[!ht]
	\centering
	\subfigure[Branin 2D]
	{
		\includegraphics[width=0.31\textwidth,height=0.28\textwidth]{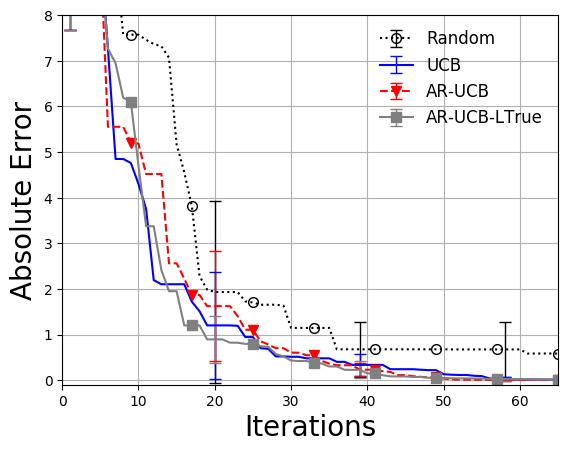}
		\label{fig:branin-2D-UCBa}
	}
	\subfigure[Camel 2D]
	{
		\includegraphics[width=0.31\textwidth,height=0.28\textwidth]{neg_camel-2-UCB.png}
		\label{fig:camel-2D-UCBa}
	}
	\subfigure[Goldstein Price 2D]
	{
		\includegraphics[width=0.31\textwidth,height=0.28\textwidth]{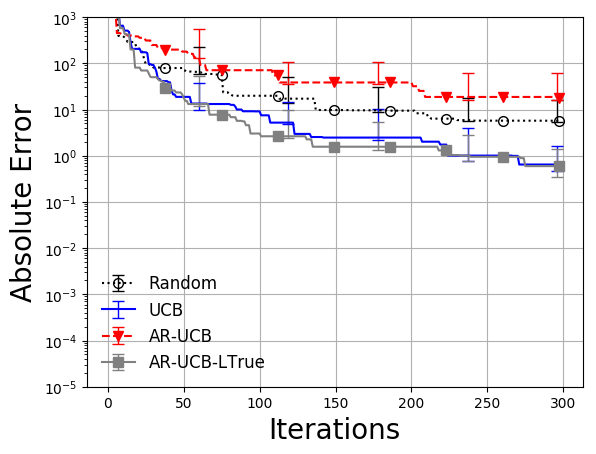}
		\label{fig:goldstein-2D-UCBa}
	}
	\\
	\subfigure[Michalwicz 2D]
	{
		\includegraphics[width=0.31\textwidth,height=0.28\textwidth]{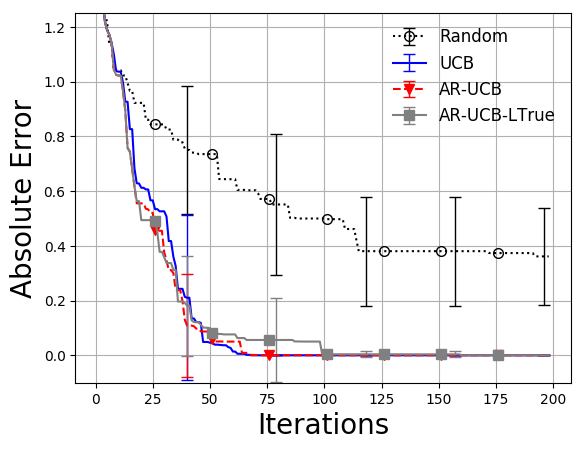}
		\label{fig:mich-2D-UCBa}
	}
	\subfigure[Michalwicz 5D]
	{
		\includegraphics[width=0.31\textwidth,height=0.28\textwidth]{neg_michalewicz-5-UCB.png}
		\label{fig:mich-5D-UCBa}
	}
	\subfigure[Michalwicz 10D]
	{
		\includegraphics[width=0.31\textwidth,height=0.28\textwidth]{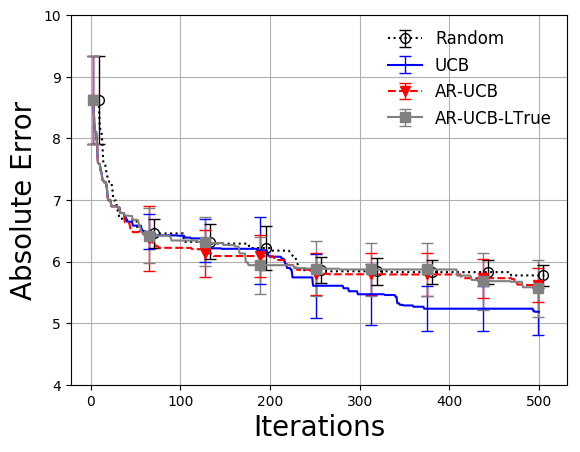}
		\label{fig:mich-10D-UCBa}
	}
	\\
	\subfigure[Rosenbrock 2D]
	{
		\includegraphics[width=0.31\textwidth,height=0.28\textwidth]{neg_rosen-2-UCB.png}
		\label{fig:rosenbrock-2D-UCBa}
	}
	\subfigure[Hartmann 3D]
	{
		\includegraphics[width=0.31\textwidth,height=0.28\textwidth]{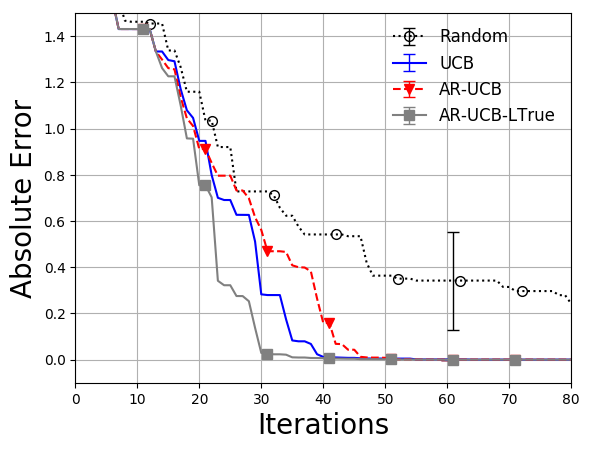}
		\label{fig:hartmann-3D-UCBa}
	}
	\subfigure[Hartmann 6D]
	{
		\includegraphics[width=0.31\textwidth,height=0.28\textwidth]{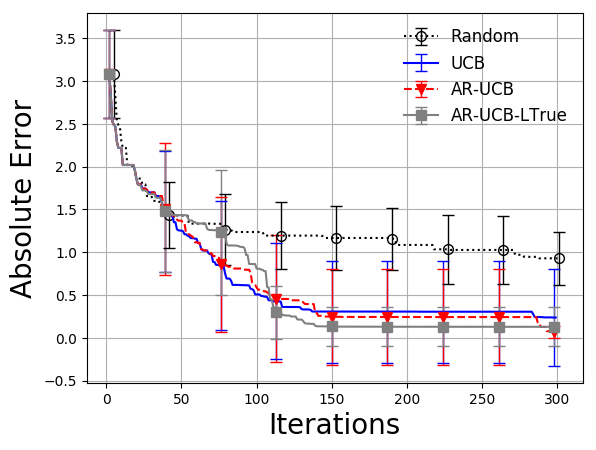}
		\label{fig:hartmann-6D-UCBa}
	}
	\\
	\subfigure[Rosenbrock 3D]
	{
		\includegraphics[width=0.31\textwidth,height=0.28\textwidth]{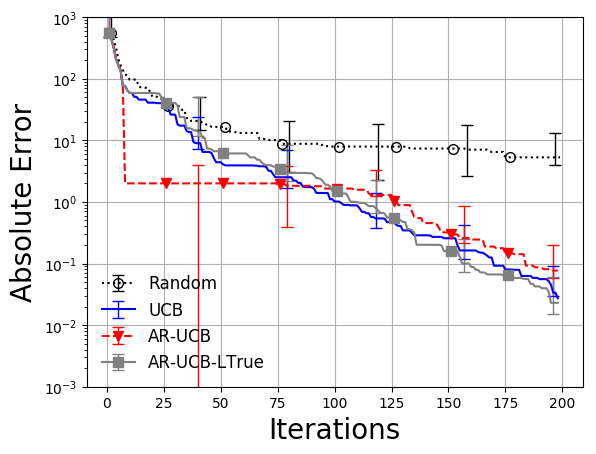}
		\label{fig:rosenbrock-3D-UCBa}
	}
	\subfigure[Rosenbrock 4D]
	{
		\includegraphics[width=0.31\textwidth,height=0.28\textwidth]{neg_rosen-4-UCB.png}
		\label{fig:rosenbrock-4D-UCBa}
	}
	\subfigure[Rosenbrock 5D]
	{
		\includegraphics[width=0.31\textwidth,height=0.28\textwidth]{neg_rosen-5-UCB.png}
		\label{fig:rosenbrock-5D-UCBa}
	}
	\\
	\subfigure[Robot pushing 3D]
	{
		\includegraphics[width=0.31\textwidth,height=0.28\textwidth]{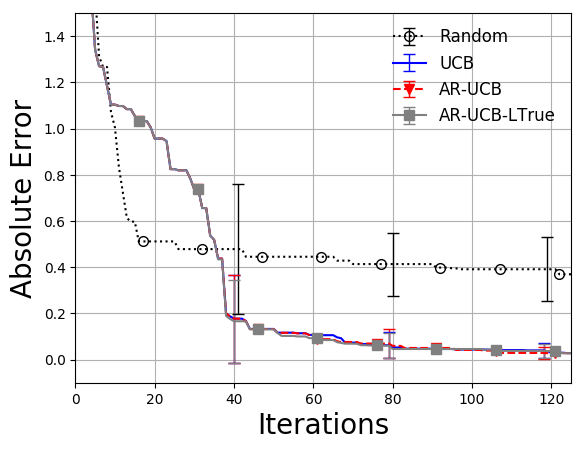}
		\label{fig:robot-3D-UCBa}
	}	
	\subfigure[Robot pushing 4D]
	{
		\includegraphics[width=0.31\textwidth,height=0.28\textwidth]{neg_robot_pushing_4D-4-UCB.png}
		\label{fig:robot-4D-UCBa}
	}
	\subfigure[Logistic Regression]
	{
		\includegraphics[width=0.31\textwidth,height=0.28\textwidth]{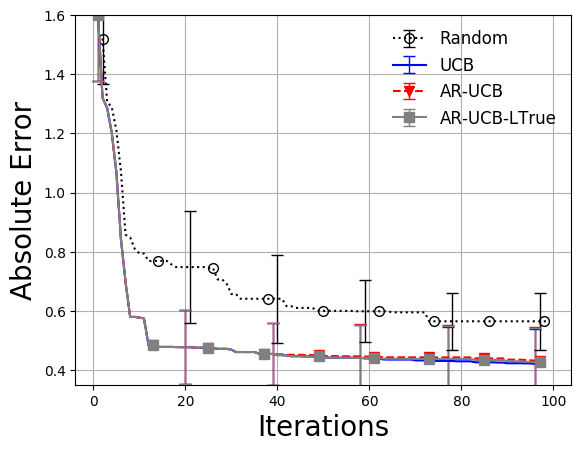}
		\label{fig:logistic-UCBa}
	}

	\caption{Comparing the performance of the conventional BO acquisition function, corresponding LBO mixed acquisition function, Lipschitz optimization and random exploration for the UCB acquisition functions.}
	\label{fig:all-UCB}
\end{figure*}

% -------------------------------- EI benchmarks  ------------------------------------------------ %
\begin{figure*}[!ht]
	\centering
	\subfigure[Branin 2D]
	{
		\includegraphics[width=0.31\textwidth,height=0.28\textwidth]{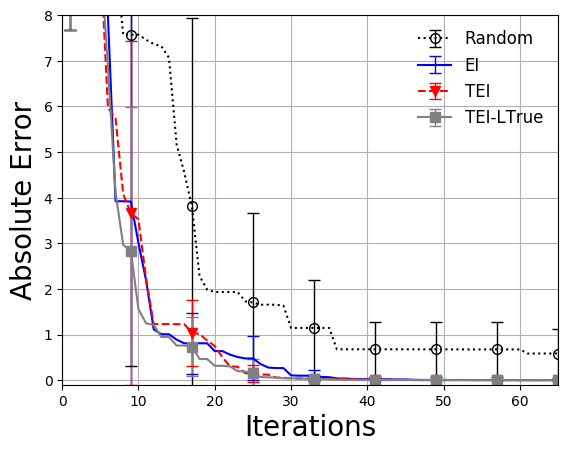}
		\label{fig:branin-2D-EIa}
	}
	\subfigure[Camel 2D]
	{
		\includegraphics[width=0.31\textwidth,height=0.28\textwidth]{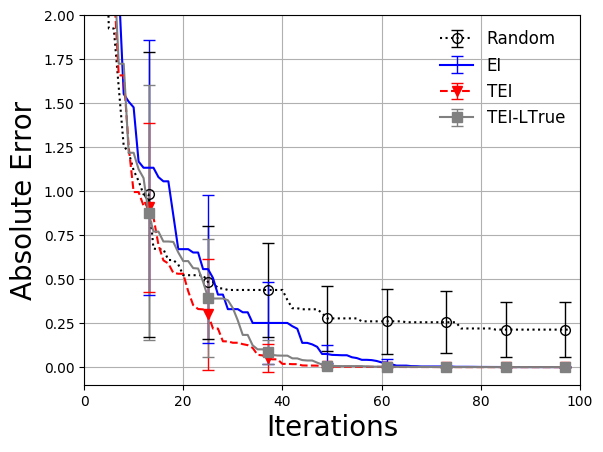}
		\label{fig:camel-2D-EIa}
	}
	\subfigure[Goldstein Price 2D]
	{
		\includegraphics[width=0.31\textwidth,height=0.28\textwidth]{neg_goldsteinPrice-2-EI.png}
		\label{fig:goldstein-2D-EIa}
	}
\\
	\subfigure[Michalwicz 2D]
	{
		\includegraphics[width=0.31\textwidth,height=0.28\textwidth]{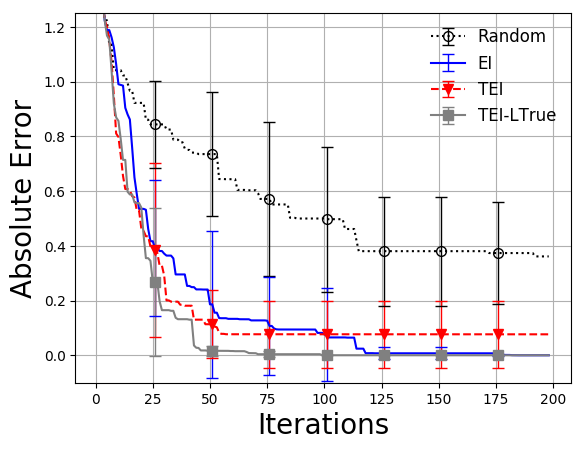}
		\label{fig:mich-2D-EIa}
	}
	\subfigure[Michalwicz 5D]
	{
		\includegraphics[width=0.31\textwidth,height=0.28\textwidth]{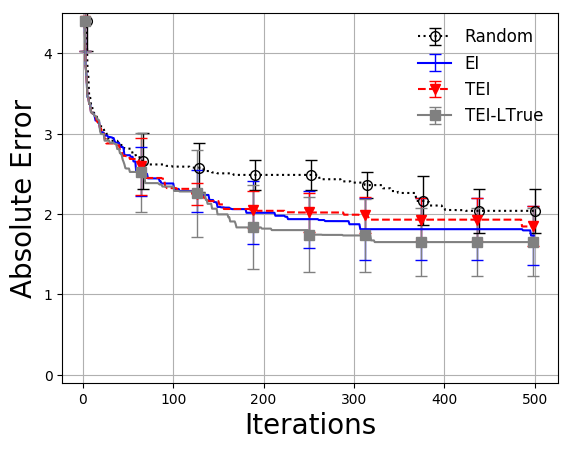}
		\label{fig:mich-5D-EIa}
	}
	\subfigure[Michalwicz 10D]
	{
		\includegraphics[width=0.31\textwidth,height=0.28\textwidth]{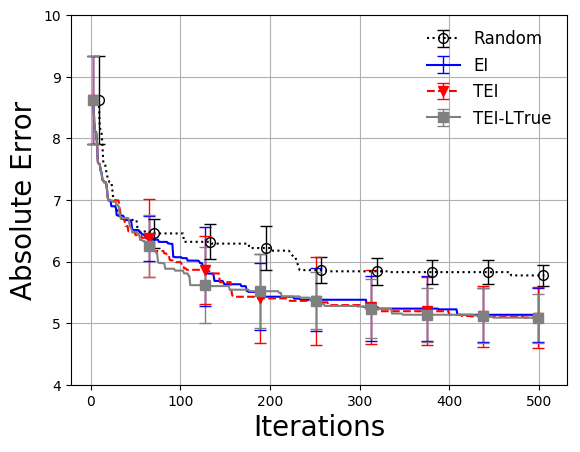}
		\label{fig:mich-10D-EIa}
	}
	\\
	\subfigure[Rosenbrock 2D]
	{
		\includegraphics[width=0.31\textwidth,height=0.28\textwidth]{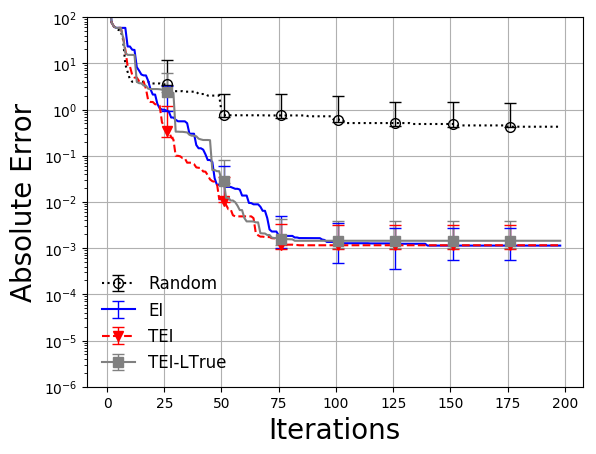}
		\label{fig:rosenbrock-2D-EIa}
	}
	\subfigure[Hartmann 3D]
	{
		\includegraphics[width=0.31\textwidth,height=0.28\textwidth]{neg_hartmann3-3-EI.png}
		\label{fig:hartmann-3D-EIa}
	}
	\subfigure[Hartmann 6D]
	{
		\includegraphics[width=0.31\textwidth,height=0.28\textwidth]{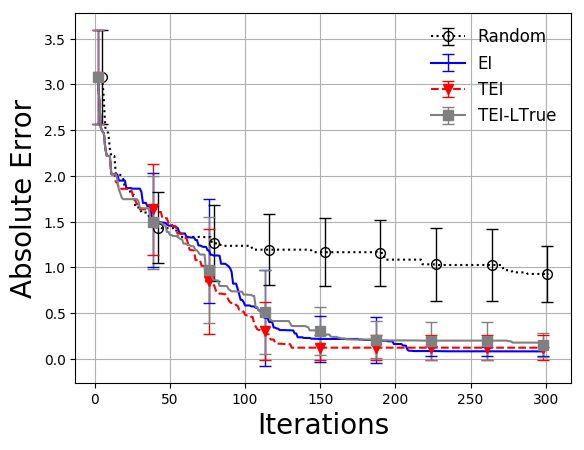}
		\label{fig:hartmann-6D-EIa}
	}
	\\
	\subfigure[Rosenbrock 3D]
	{
	\includegraphics[width=0.31\textwidth,height=0.28\textwidth]{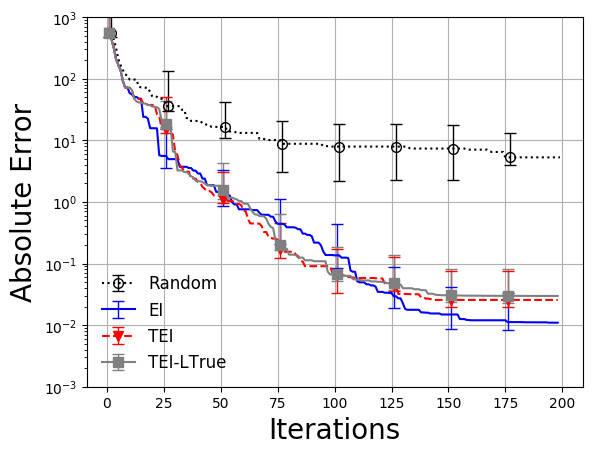}
	\label{fig:rosenbrock-3D-EIa}
	}
	\subfigure[Rosenbrock 4D]
	{
	\includegraphics[width=0.31\textwidth,height=0.28\textwidth]{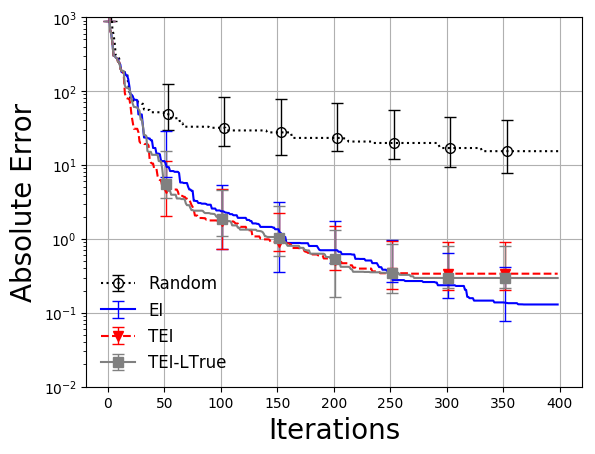}
	\label{fig:rosenbrock-4D-EIa}
	}
	\subfigure[Rosenbrock 5D]
	{
	\includegraphics[width=0.31\textwidth,height=0.28\textwidth]{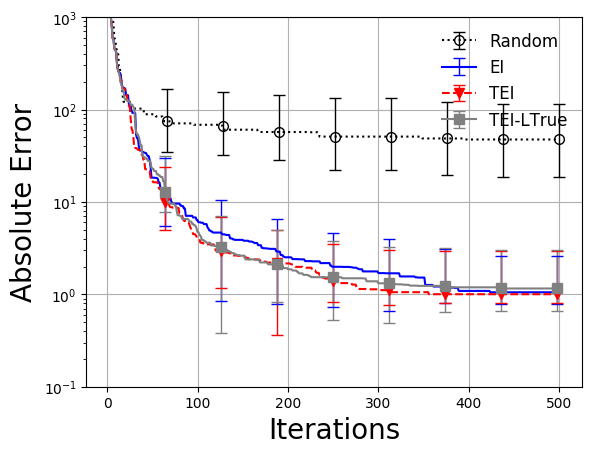}
	\label{fig:rosenbrock-5D-EIa}
	}
	\\	
	\subfigure[Robot pushing 3D]
	{
		\includegraphics[width=0.31\textwidth,height=0.28\textwidth]{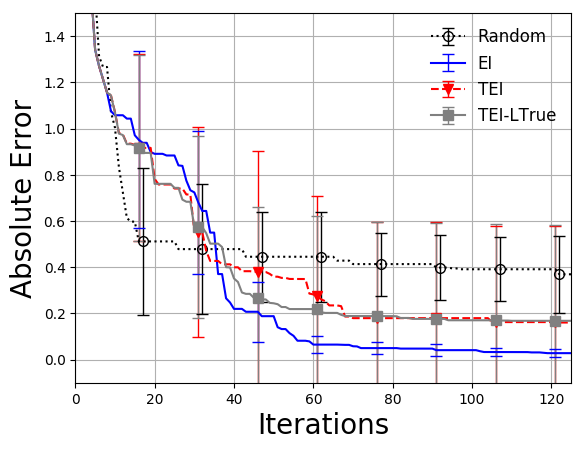}
		\label{fig:robot-3D-EIa}
	}	
	\subfigure[Robot pushing 4D]
	{
		\includegraphics[width=0.31\textwidth,height=0.28\textwidth]{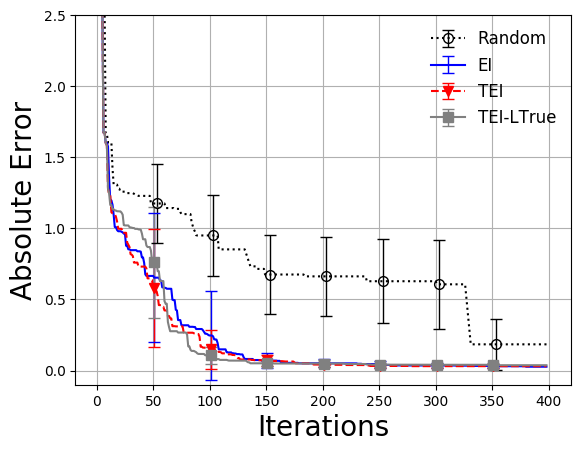}
		\label{fig:robot-4D-EIa}
	}
	\subfigure[Logistic Regression]
	{
		\includegraphics[width=0.31\textwidth,height=0.28\textwidth]{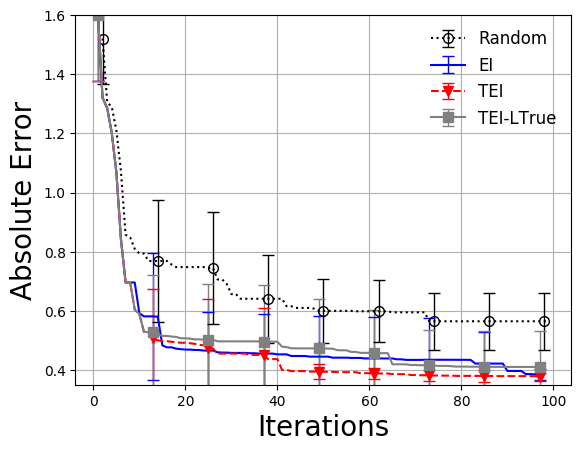}
		\label{fig:logistic-EIa}
	}
	
	\caption{Comparing the performance of the conventional BO acquisition function, corresponding LBO mixed acquisition function, Lipschitz optimization and random exploration for the EI acquisition functions.}
	\label{fig:all-EI}
\end{figure*}

% -------------------------------- PI benchmarks  ------------------------------------------------ %
\begin{figure*}[!ht]
	\centering
	\subfigure[Branin 2D]
	{
		\includegraphics[width=0.31\textwidth,height=0.28\textwidth]{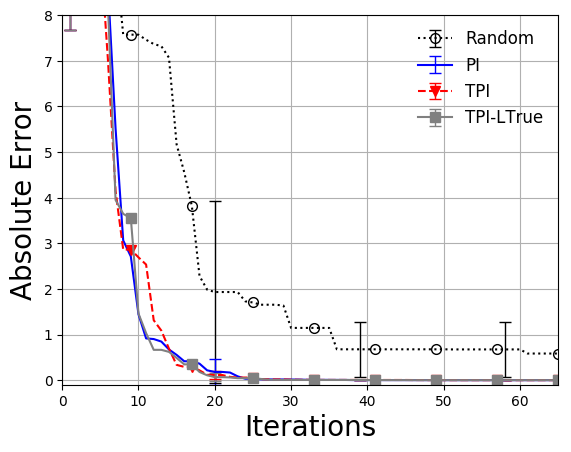}
		\label{fig:branin-2D-PIa}
	}
	\subfigure[Camel 2D]
	{
		\includegraphics[width=0.31\textwidth,height=0.28\textwidth]{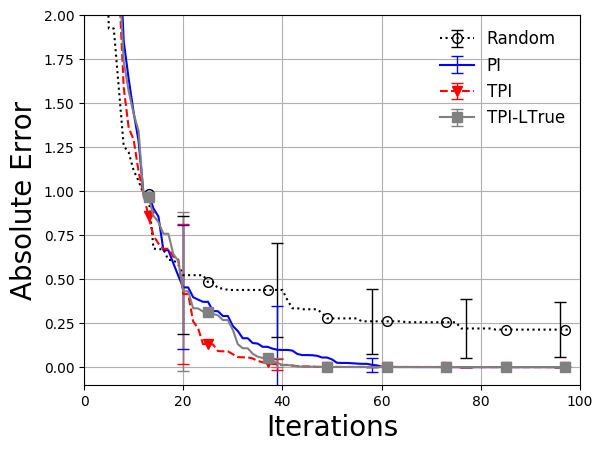}
		\label{fig:camel-2D-PIa}
	}
	\subfigure[Goldstein Price 2D]
	{
		\includegraphics[width=0.31\textwidth,height=0.28\textwidth]{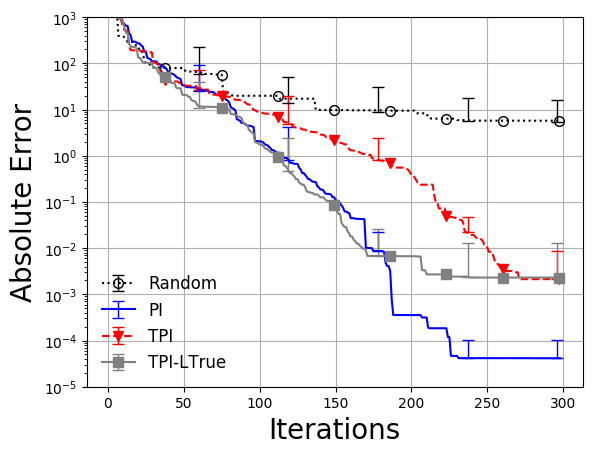}
		\label{fig:goldstein-2D-PIa}
	}
	\\
	\subfigure[Michalwicz 2D]
	{
		\includegraphics[width=0.31\textwidth,height=0.28\textwidth]{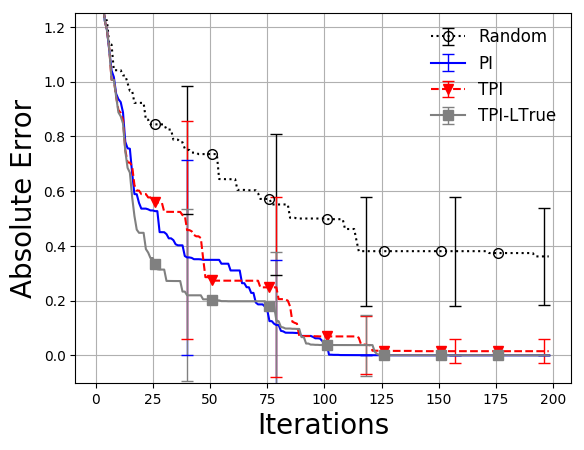}
		\label{fig:mich-2D-PIa}
	}
	\subfigure[Michalwicz 5D]
	{
		\includegraphics[width=0.31\textwidth,height=0.28\textwidth]{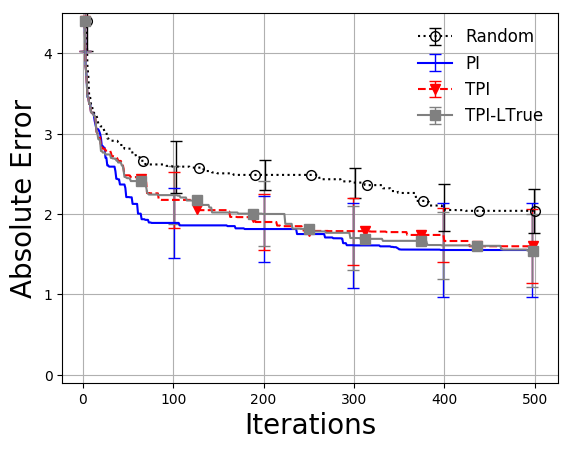}
		\label{fig:mich-5D-PIa}
	}
	\subfigure[Michalwicz 10D]
	{
		\includegraphics[width=0.31\textwidth,height=0.28\textwidth]{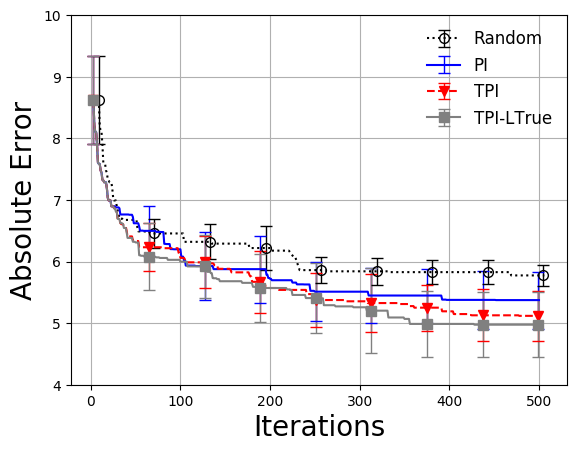}
		\label{fig:mich-10D-PIa}
	}
	\\
	\subfigure[Rosenbrock 2D]
	{
		\includegraphics[width=0.31\textwidth,height=0.28\textwidth]{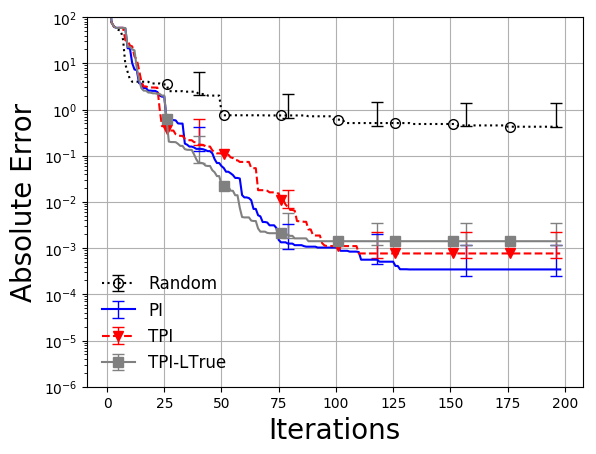}
		\label{fig:rosenbrock-2D-PIa}
	}
	\subfigure[Hartmann 3D]
	{
		\includegraphics[width=0.31\textwidth,height=0.28\textwidth]{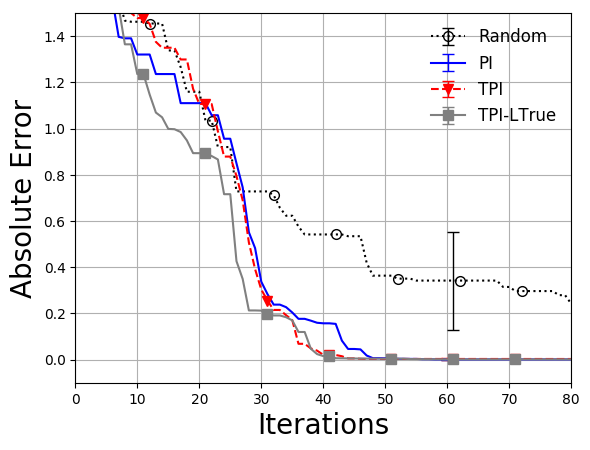}
		\label{fig:hartmann-3D-PIa}
	}
	\subfigure[Hartmann 6D]
	{
		\includegraphics[width=0.31\textwidth,height=0.28\textwidth]{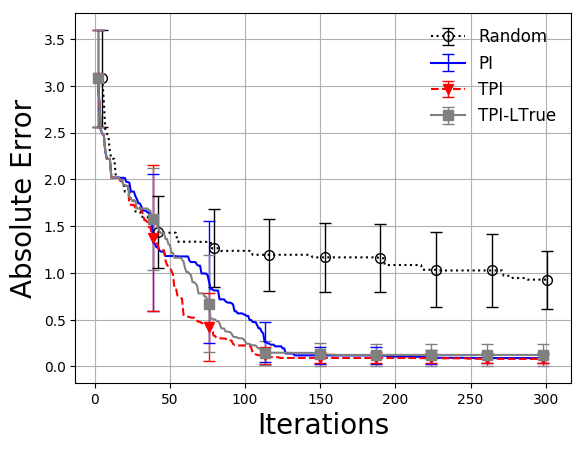}
		\label{fig:hartmann-6D-PIa}
	}
	\\
	\subfigure[Rosenbrock 3D]
	{
		\includegraphics[width=0.31\textwidth,height=0.28\textwidth]{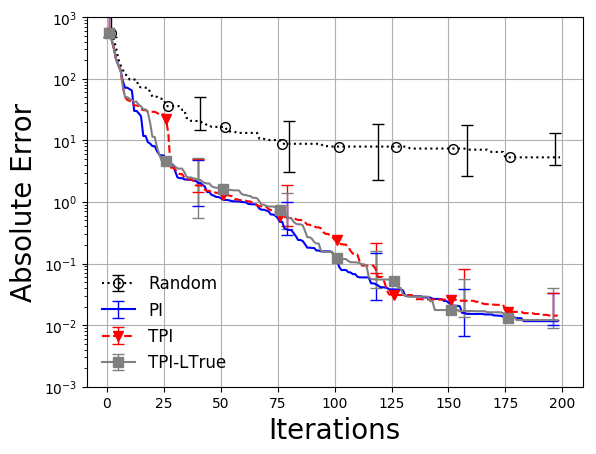}
		\label{fig:rosenbrock-3D-PIa}
	}
	\subfigure[Rosenbrock 4D]
	{
		\includegraphics[width=0.31\textwidth,height=0.28\textwidth]{neg_rosen-4-PI.png}
		\label{fig:rosenbrock-4D-PIa}
	}
	\subfigure[Rosenbrock 5D]
	{
		\includegraphics[width=0.31\textwidth,height=0.28\textwidth]{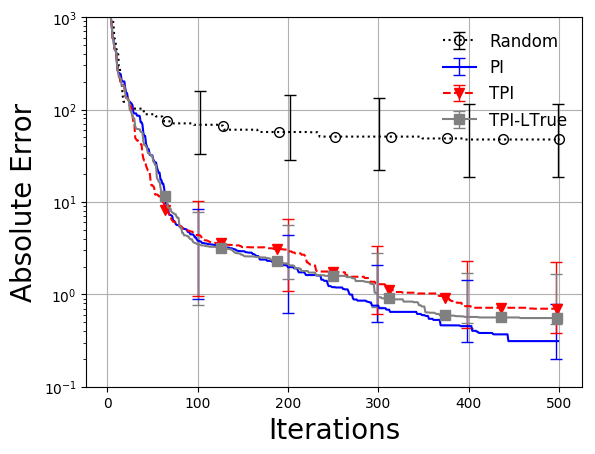}
		\label{fig:rosenbrock-5D-PIa}
	}	
	\\	
	\subfigure[Robot pushing 3D]
	{
		\includegraphics[width=0.31\textwidth,height=0.28\textwidth]{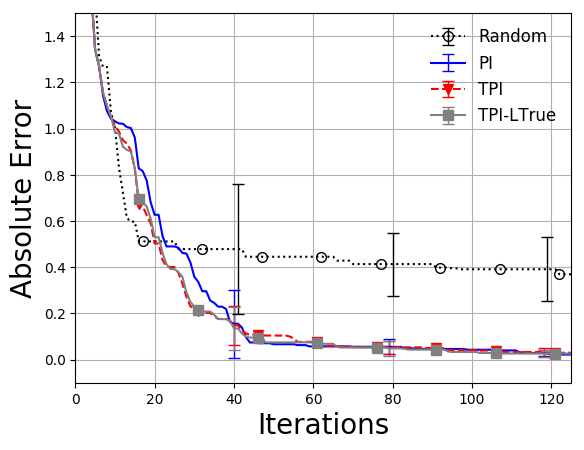}
		\label{fig:robot-3D-PIa}
	}	
	\subfigure[Robot pushing 4D]
	{
		\includegraphics[width=0.31\textwidth,height=0.28\textwidth]{neg_robot_pushing_4D-4-PI.png}
		\label{fig:robot-4D-PIa}
	}
	\subfigure[Logistic Regression]
	{
		\includegraphics[width=0.31\textwidth,height=0.28\textwidth]{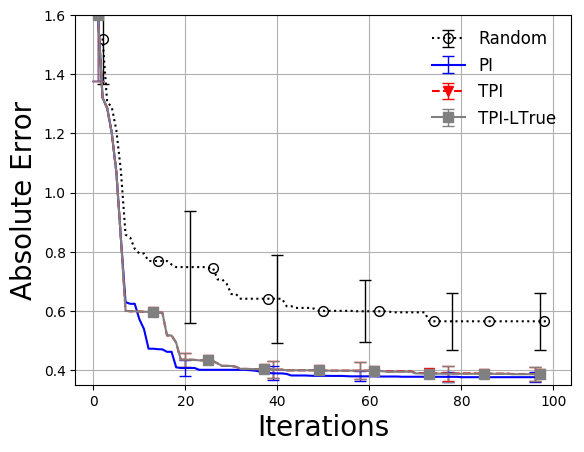}
		\label{fig:logistic-PIa}
	}	
	
	\caption{Comparing the performance of the conventional BO acquisition function, corresponding LBO mixed acquisition function, Lipschitz optimization and random exploration for the PI acquisition functions.}
	\label{fig:all-PI}
\end{figure*}

% -------------------------------- Joint benchmarks  ------------------------------------------------ %
\begin{figure*}[!ht]
	\centering
	\subfigure[Branin 2D]
	{
		\includegraphics[width=0.31\textwidth,height=0.28\textwidth]{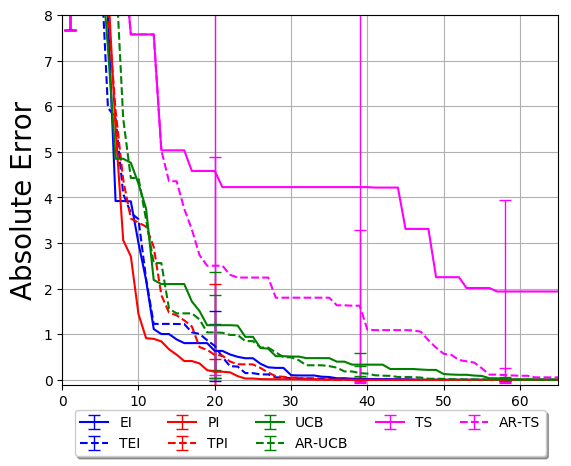}
		\label{fig:branin-2Da}
	}
	\subfigure[Camel 2D]
	{
		\includegraphics[width=0.31\textwidth,height=0.28\textwidth]{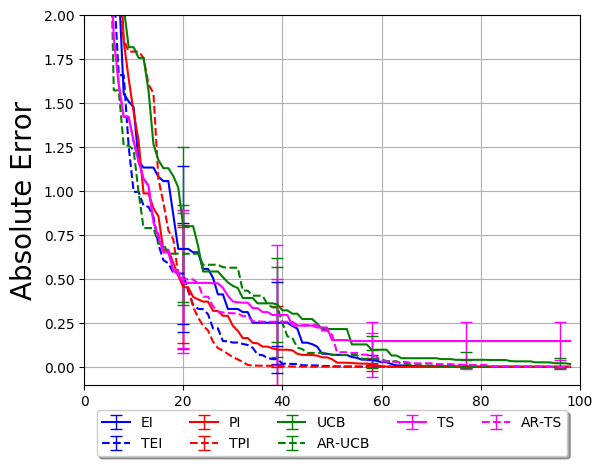}
		\label{fig:camel-2Da}
	}
	\subfigure[Goldstein Price 2D]
	{
		\includegraphics[width=0.31\textwidth,height=0.28\textwidth]{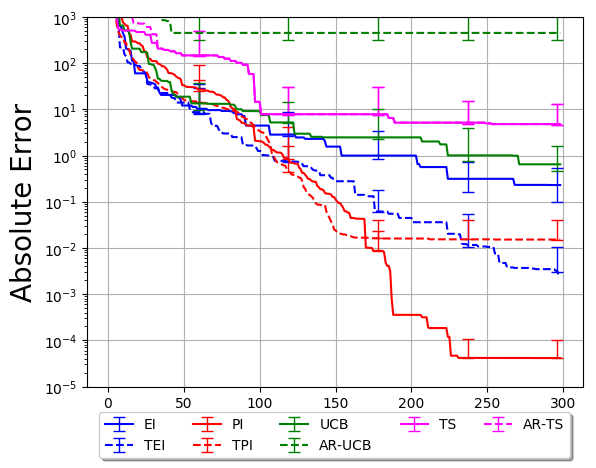}
		\label{fig:goldstein-2Da}
	}
	\\
	\subfigure[Michalwicz 2D]
	{
		\includegraphics[width=0.31\textwidth,height=0.28\textwidth]{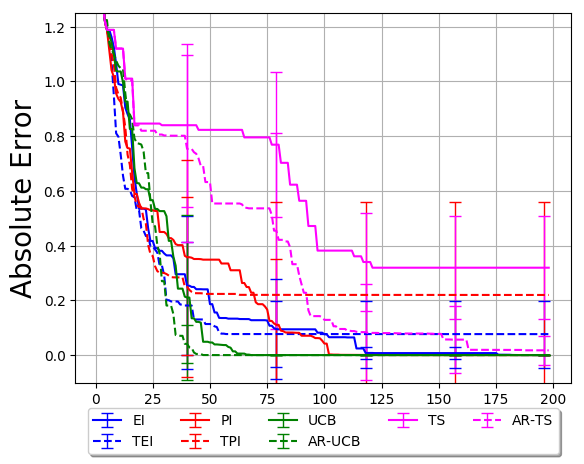}
		\label{fig:mich-2Da}
	}
	\subfigure[Michalwicz 5D]
	{
		\includegraphics[width=0.31\textwidth,height=0.28\textwidth]{neg_michalewicz-5.png}
		\label{fig:mich-5Da}
	}
	\subfigure[Michalwicz 10D]
	{
		\includegraphics[width=0.31\textwidth,height=0.28\textwidth]{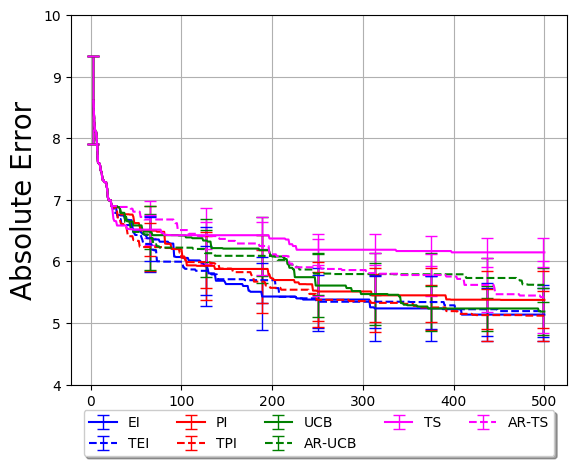}
		\label{fig:mich-10Da}
	}
	\\
	\subfigure[Rosenbrock 2D]
	{
		\includegraphics[width=0.31\textwidth,height=0.28\textwidth]{neg_rosen-2.png}
		\label{fig:rosenbrock-2Da}
	}
	\subfigure[Hartmann 3D]
	{
		\includegraphics[width=0.31\textwidth,height=0.28\textwidth]{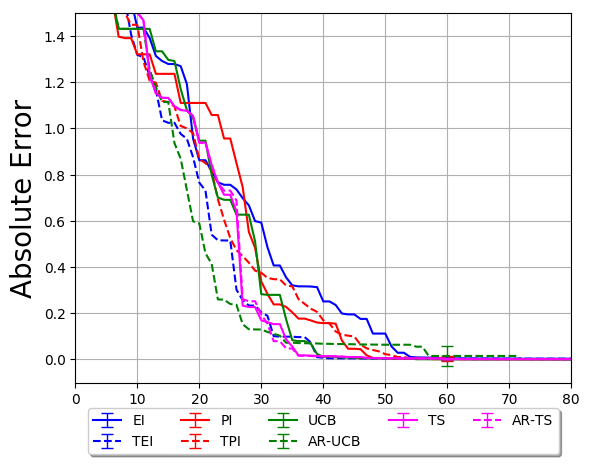}
		\label{fig:hartmann-3Da}
	}
	\subfigure[Hartmann 6D]
	{
		\includegraphics[width=0.31\textwidth,height=0.28\textwidth]{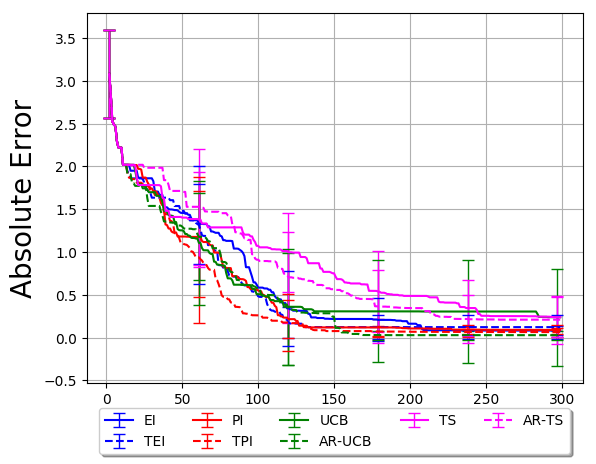}
		\label{fig:hartmann-6Da}
	}
	\\
	\subfigure[Rosenbrock 3D]
	{
		\includegraphics[width=0.31\textwidth,height=0.28\textwidth]{neg_rosen-3.png}
		\label{fig:rosenbrock-3Da}
	}
	\subfigure[Rosenbrock 4D]
	{
		\includegraphics[width=0.31\textwidth,height=0.28\textwidth]{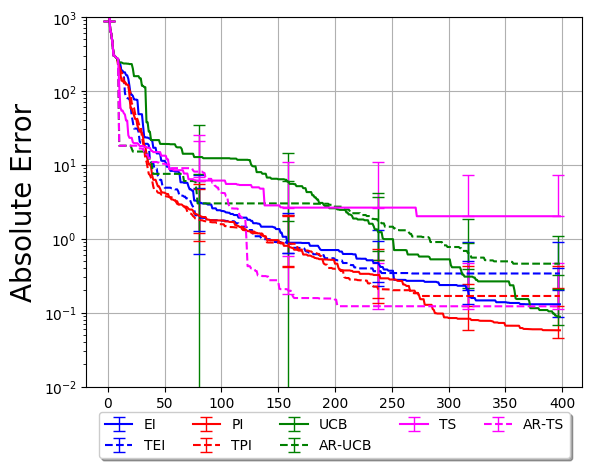}
		\label{fig:rosenbrock-4Da}
	}
	\subfigure[Rosenbrock 5D]
	{
		\includegraphics[width=0.31\textwidth,height=0.28\textwidth]{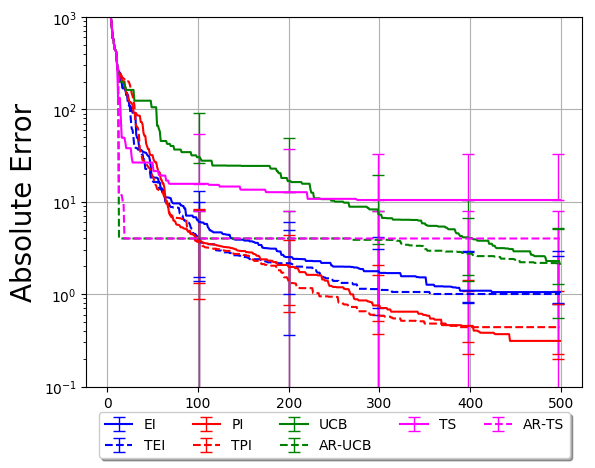}
		\label{fig:rosenbrock-5Da}
	}
\\

\subfigure[Robot pushing 3D]
{
	\includegraphics[width=0.31\textwidth,height=0.28\textwidth]{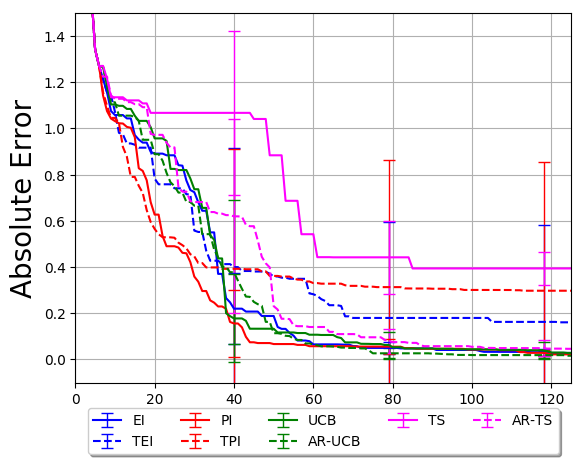}
	\label{fig:robot-3D-PIa}
}	
\subfigure[Robot pushing 4D]
{
	\includegraphics[width=0.31\textwidth,height=0.28\textwidth]{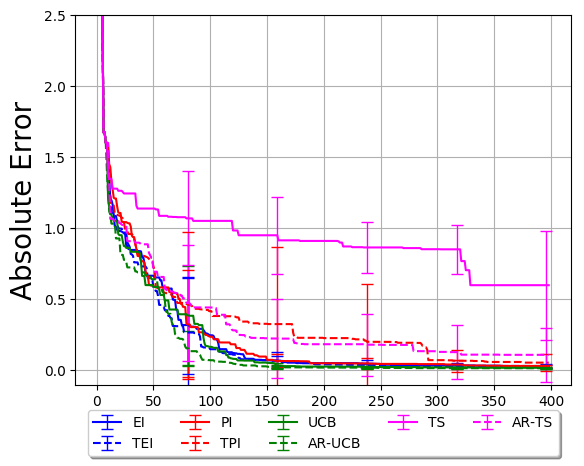}
	\label{fig:robot-4D-PIa}
}
\subfigure[Logistic Regression]
{
	\includegraphics[width=0.31\textwidth,height=0.28\textwidth]{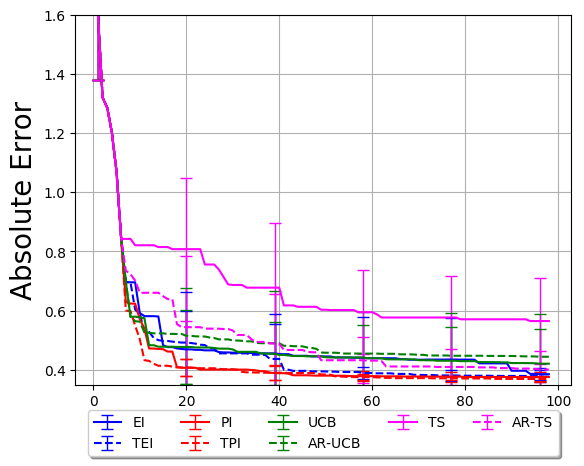}
	\label{fig:logistic-PIa}
}

	\caption{Comparing the performance across the four BO and the corresponding LBO acquisition functions against Lipschitz optimization and random exploration on all the test functions (Better seen in color).}
	\label{fig:all-joint}
\end{figure*}

% -------------------------------- Large UCB benchmarks  ------------------------------------------------ %
\begin{figure*}[!ht]
	\centering
	\subfigure[Branin 2D]
	{
		\includegraphics[width=0.31\textwidth,height=0.28\textwidth]{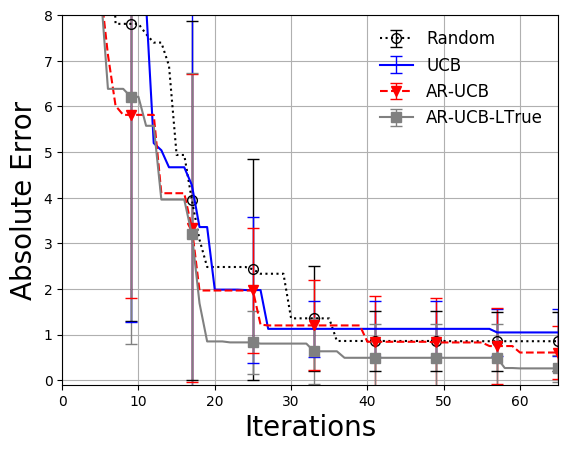}
		\label{fig:branin-2D-UCBa}
	}
	\subfigure[Camel 2D]
	{
		\includegraphics[width=0.31\textwidth,height=0.28\textwidth]{neg_camel-2-UCB_large.png}
		\label{fig:camel-2D-UCBa}
	}
	\subfigure[Goldstein Price 2D]
	{
		\includegraphics[width=0.31\textwidth,height=0.28\textwidth]{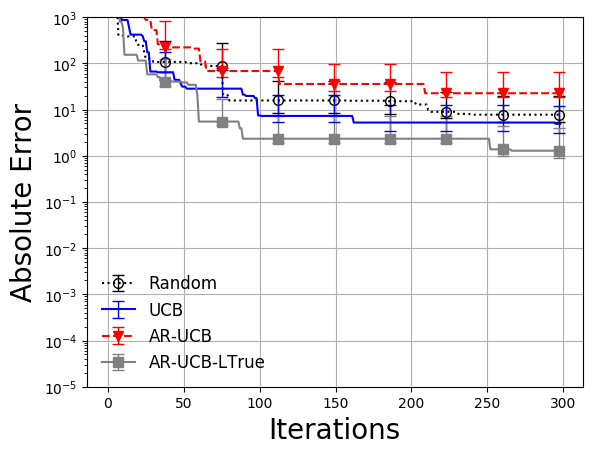}
		\label{fig:goldstein-2D-UCBa}
	}
	\\
	\subfigure[Michalwicz 2D]
	{
		\includegraphics[width=0.31\textwidth,height=0.28\textwidth]{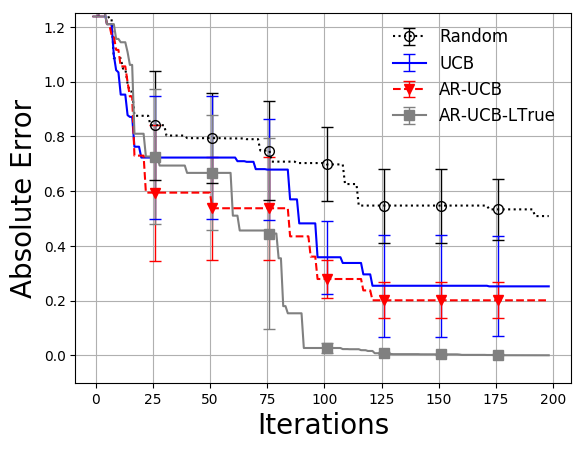}
		\label{fig:mich-2D-UCBa}
	}
	\subfigure[Michalwicz 5D]
	{
		\includegraphics[width=0.31\textwidth,height=0.28\textwidth]{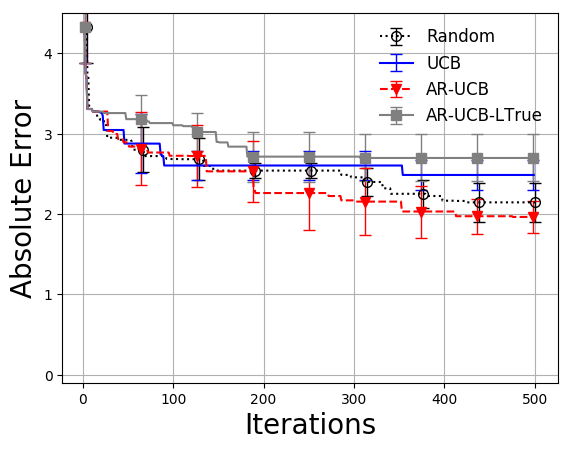}
		\label{fig:mich-5D-UCBa}
	}
	\subfigure[Michalwicz 10D]
	{
		\includegraphics[width=0.31\textwidth,height=0.28\textwidth]{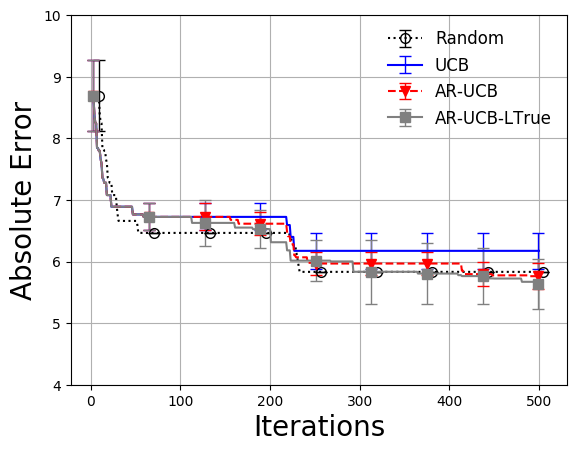}
		\label{fig:mich-10D-UCBa}
	}
	\\
	\subfigure[Rosenbrock 2D]
	{
		\includegraphics[width=0.31\textwidth,height=0.28\textwidth]{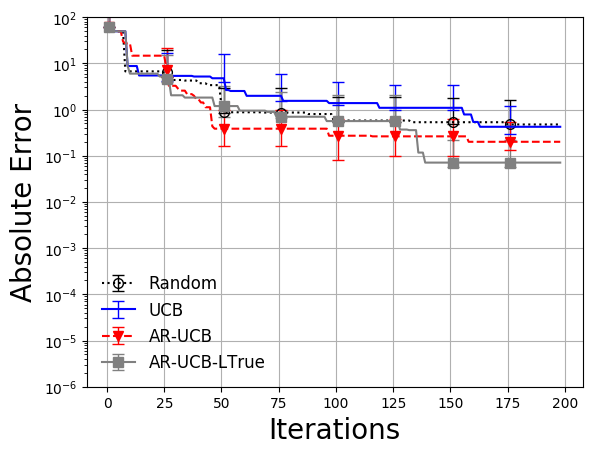}
		\label{fig:rosenbrock-2D-UCBa}
	}
	\subfigure[Hartmann 3D]
	{
		\includegraphics[width=0.31\textwidth,height=0.28\textwidth]{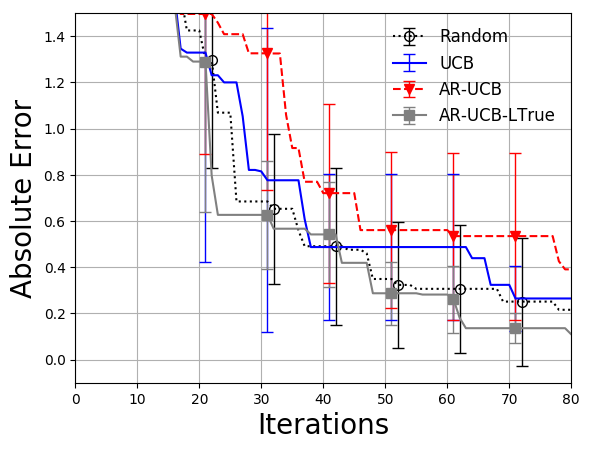}
		\label{fig:hartmann-3D-UCBa}
	}
	\subfigure[Hartmann 6D]
	{
		\includegraphics[width=0.31\textwidth,height=0.28\textwidth]{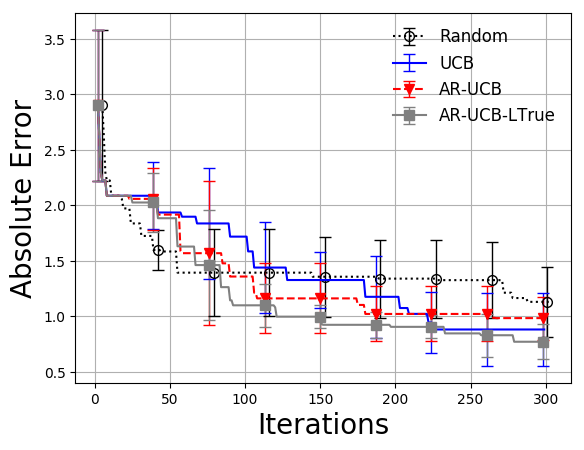}
		\label{fig:hartmann-6D-UCBa}
	}
	\\
	\subfigure[Rosenbrock 3D]
	{
		\includegraphics[width=0.31\textwidth,height=0.28\textwidth]{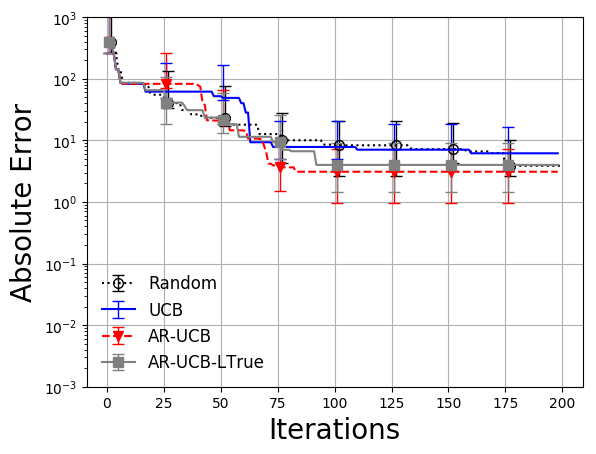}
		\label{fig:rosenbrock-3D-UCBa}
	}
	\subfigure[Rosenbrock 4D]
	{
		\includegraphics[width=0.31\textwidth,height=0.28\textwidth]{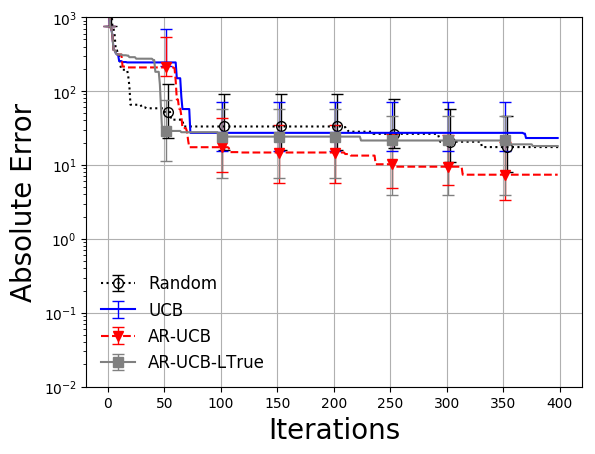}
		\label{fig:rosenbrock-4D-UCBa}
	}
	\subfigure[Rosenbrock 5D]
	{
		\includegraphics[width=0.31\textwidth,height=0.28\textwidth]{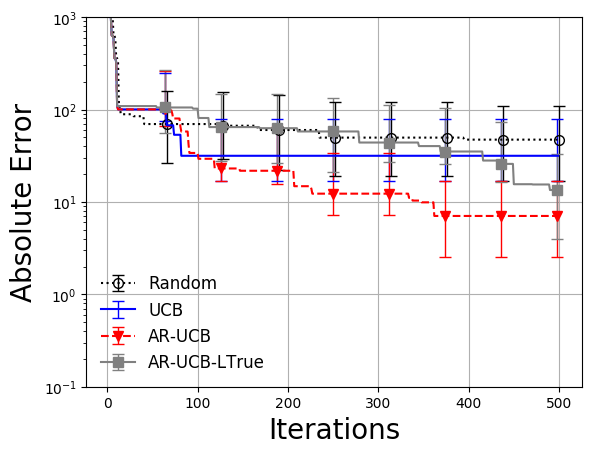}
		\label{fig:rosenbrock-5D-UCBa}
	}
	\\
	\subfigure[Robot pushing 3D]
	{
		\includegraphics[width=0.31\textwidth,height=0.28\textwidth]{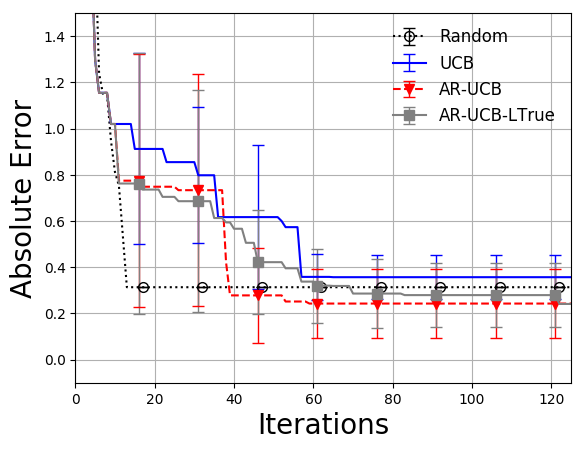}
		\label{fig:robot-3D-UCBa}
	}	
	\subfigure[Robot pushing 4D]
	{
		\includegraphics[width=0.31\textwidth,height=0.28\textwidth]{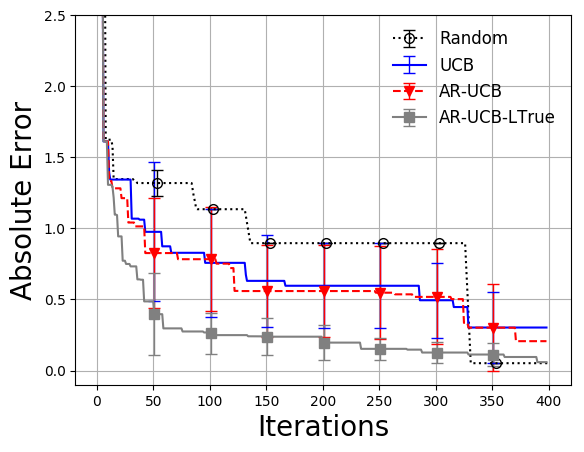}
		\label{fig:robot-4D-UCBa}
	}
	\subfigure[Logistic Regression]
	{
		\includegraphics[width=0.31\textwidth,height=0.28\textwidth]{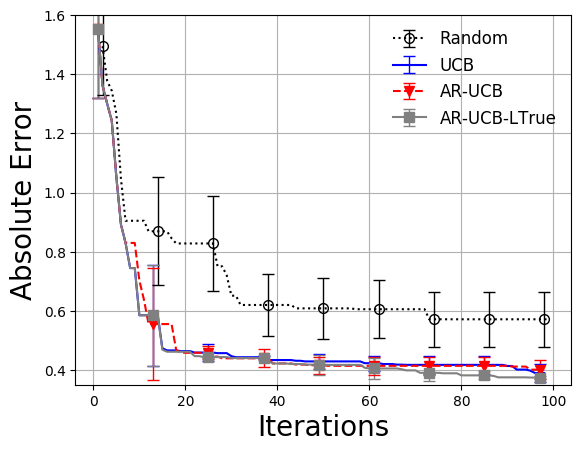}
		\label{fig:logistic-UCBa}
	}

	\caption{Comparing the performance of the conventional BO acquisition function, corresponding LBO mixed acquisition function, Lipschitz optimization and random exploration for the UCB acquisition functions when using very large $\beta = 10^{16}$}
	\label{fig:all-large-UCB}
\end{figure*}

\end{document}